\documentclass{article}

\usepackage{microtype}
\usepackage{graphicx}
\usepackage{subcaption}
\usepackage{booktabs} 

\usepackage{hyperref}

\usepackage[preprint]{icml2026}

\usepackage{amsmath}
\usepackage{amssymb}
\usepackage{mathtools}
\usepackage{amsthm}

\usepackage[capitalize,noabbrev]{cleveref}

\theoremstyle{plain}
\newtheorem{theorem}{Theorem}[section]
\newtheorem{proposition}[theorem]{Proposition}

\theoremstyle{definition}
\newtheorem{definition}[theorem]{Definition}
\newtheorem{assumption}[theorem]{Assumption}
\newtheorem{hypothesis}[theorem]{Hypothesis}
\theoremstyle{remark}

\usepackage[textsize=tiny]{todonotes}

\icmltitlerunning{LLM-based Embeddings}

\usepackage{graphicx}
\usepackage{stmaryrd}
\usepackage{multirow}
\usepackage{microtype}
\usepackage{multirow}
\usepackage{makecell}
\usepackage{graphicx}
\usepackage{booktabs}

\begin{document}

\twocolumn[
  \icmltitle{LLM-based Embeddings: \\ Attention Values Encode Sentence Semantics Better Than Hidden States}



  \icmlsetsymbol{equal}{*}

  \begin{icmlauthorlist}
    \icmlauthor{Yeqin Zhang}{nju,njuai}
    \icmlauthor{Yunfei Wang}{nju,njuai}
    \icmlauthor{Jiaxuan Chen}{nju,njuai}
    \icmlauthor{Ke Qin}{nju,njuai}
    \icmlauthor{Yizheng Zhao}{nju,njuai}
    \icmlauthor{Cam-Tu Nguyen}{nju,njuai}
  \end{icmlauthorlist}

  \icmlaffiliation{nju}{State Key Laboratory for Novel Software Technology, Nanjing University, China}
    \icmlaffiliation{njuai}{School of Artificial Intelligence, Nanjing University, China}
  \icmlcorrespondingauthor{Cam-Tu Nguyen}{ncamtu@nju.edu.cn}

  \icmlkeywords{Representation Learning, Embedding, Large Language Model}

  \vskip 0.3in
]



\printAffiliationsAndNotice{}  

\begin{abstract}

Sentence representations are foundational to many Natural Language Processing (NLP) applications. While recent methods leverage Large Language Models (LLMs) to derive sentence representations, most rely on final-layer hidden states, which are optimized for next-token prediction and thus often fail to capture global, sentence-level semantics.
This paper introduces a novel perspective, demonstrating that attention value vectors capture sentence semantics more effectively than hidden states. We propose \textit{Value Aggregation (VA)}, a simple method that pools token values across multiple layers and token indices. In a training-free setting, VA outperforms other LLM-based embeddings, even matches or surpasses the ensemble-based MetaEOL. Furthermore, we demonstrate that when paired with suitable prompts, the layer attention outputs can be interpreted as aligned weighted value vectors. Specifically, the attention scores of the last token function as the weights, while the output projection matrix ($W_O$) aligns these weighted value vectors with the common space of the LLM residual stream. This refined method, termed \textit{Aligned Weighted VA (AlignedWVA)}, achieves state-of-the-art performance among training-free LLM-based embeddings, outperforming the high-cost MetaEOL by a substantial margin. Finally, we highlight the potential of obtaining strong LLM embedding models through fine-tuning Value Aggregation.
\end{abstract}
\section{Introduction}

Text representation is fundamental to natural language processing, enabling tasks such as semantic similarity, retrieval, clustering, and anomaly detection \cite{DBLP:conf/eacl/MuennighoffTMR23, DBLP:conf/iclr/EnevoldsenCKKMS25, DBLP:conf/sigir/XiaoLZMLN24,DBLP:journals/corr/abs-2104-08663}. By embedding text into high-dimensional vector spaces, models can effectively capture semantic relationships and enhance downstream applications such as Retrieval-Augmented Generation \cite{DBLP:conf/emnlp/KarpukhinOMLWEC20, DBLP:conf/emnlp/GaoYC21,DBLP:conf/emnlp/ReimersG19}, among other higher-level tasks \cite{DBLP:conf/emnlp/KarpukhinOMLWEC20, DBLP:conf/emnlp/GaoYC21,DBLP:conf/emnlp/ReimersG19}.
Recent studies also exploit internal text representations (embeddings) for model behavior interpretability, showing that attention sinks, compression valleys, and residual-stream representations influence embedding geometry and token-level predictions \cite{DBLP:conf/emnlp/WangLDCZMZS23, DBLP:conf/icml/SkeanAZPNLS25, DBLP:journals/corr/abs-2510-06477, wendler-etal-2024-llamas, ICLR2025_84fb08ae}.


Despite their strong generative capabilities, autoregressive large language models (LLMs) are not inherently optimized for producing high-quality representations. Trained with a next-token prediction objective, these models differ fundamentally from representation learning approaches that focus on sequence-level semantics. Consequently, autoregressive LLMs often struggle on tasks requiring robust sentence-level embeddings \cite{DBLP:conf/iclr/MuennighoffS00W25, DBLP:journals/corr/abs-2404-05961, zhang2025learningcompressunlockingpotential, DBLP:journals/corr/abs-2509-03020}.

To address these, existing work follows two main directions. The first introduces additional pretraining objectives to adapt LLMs into embedding models, as in LLM2Vec \cite{DBLP:journals/corr/abs-2404-05961}, LLM2Comp \cite{zhang2025learningcompressunlockingpotential}. However, these methods incur substantial training costs and often compromise the model’s generative capabilities. The second direction employs different inference-time strategies such as Explicit One-Word Limitation (EOL) prompting, which concentrates sentence information into the next token and improves representation quality without parameter updates \cite{DBLP:conf/acl/LeiW00CTY24, DBLP:conf/acl/ChengWFJ00025}. This approach, however, still exhibits a performance gap compared to fully trained LLM-based embedding models.

A key limitation of many existing approaches lies in their reliance on the last-layer hidden state as the sentence embedding. In this paper, we explore whether alternative internal features of LLMs, beyond hidden states, provide a stronger basis for representing sentence semantics. Addressing this question has important implications for both training-based and training-free approaches to LLM embeddings.

Inspired by truth conditional semantics \cite{davidson1967truth}, which defines two sentences as semantically similar when their truth values remain consistent across contexts, we propose that sentence meaning can be approximated by treating text continuation space as observable proxies for a sentence’s contextual space. We then empirically show that a sentence’s influence on its continuation is primarily determined by its value vectors.


Building on this insight, we introduce \textit{Value Aggregation (VA)}, a straightforward approach that aggregates token value vectors across multiple layers and positions. In a training-free setting, VA consistently outperforms existing LLM-based embedding methods and even rivals or exceeds the ensemble-based MetaEOL on MTEB tasks across two base models (LLaMA-2 (7B) and Qwen-3 (8B)). Furthermore, we demonstrate that, with appropriate prompting, the attention layer outputs can be interpreted as aligned weighted VA. Specifically, the last token’s attention scores act as weights, while the output projection matrix ($W_O$) maps the weighted values into the shared residual-stream space of the LLM. This enhanced method, termed \textit{Aligned Weighted Value Aggregation (AlignedWVA)}, achieves state-of-the-art performance among training-free LLM-based embeddings, surpassing the high-cost MetaEOL baseline by a substantial margin. It is noteworthy that AlignedWVA incurs significantly lower encoding cost, as it does not require passing each sentence through multiple prompts as in MetaEOL.  

Although fine-tuning is not the primary focus of this paper, we show that \textit{Finetune-VA} achieves performance comparable to fine-tuning mean-pooled hidden states while requiring far fewer trainable parameters.

This work makes the following key contributions:  
\begin{enumerate}
    \item \textbf{Theoretical insight:} We establish that value vectors, rather than hidden states, more directly capture sentence semantics by encoding a sentence’s influence on its continuation under the truth-conditional framework.
    \item \textbf{Methodology:} We propose \textit{Value Aggregation (VA)}, a simple and training-free method that aggregates value vectors across layers and tokens to produce high-quality sentence embeddings. We further extend VA to \textit{Aligned Weighted Value Aggregation (AlignedWVA)}, which aligns weighted VA with the residual stream space, achieving state-of-the-art performance among training-free LLM embeddings. Note that AlignedWVA offers substantial computational advantages over ensemble-based methods such as MetaEOL, 
    \item \textbf{Fine-tuning:} We preliminarily demonstrate the potential of developing LLM embedding models through fine-tuning VA. Additionally, \textit{Finetune-VA}, when applied only to the attention layers, achieves performance comparable to fine-tuned HS baselines while requiring substantially fewer trainable parameters.

\end{enumerate}




\section{Preliminaries}
\label{sec:preliminaries}

\begin{figure*}
    \centering
    \includegraphics[width=0.97\linewidth]{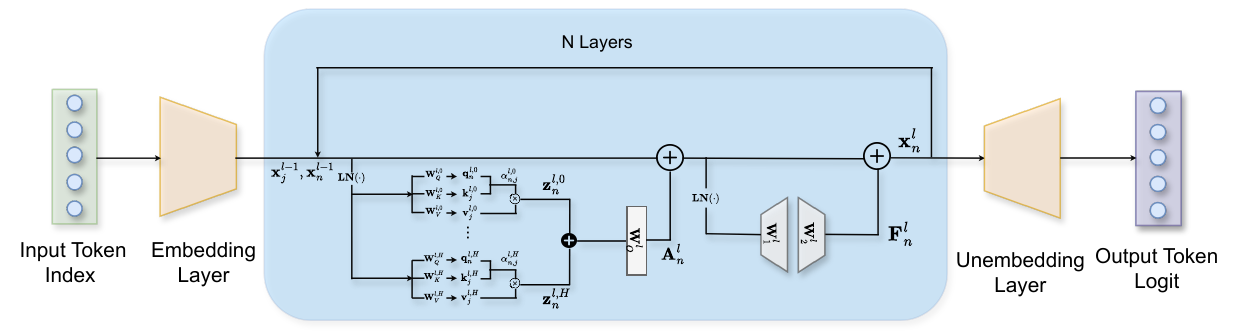}
    \caption{The value aggregation architecture, which involves pooling the token-level value representations across all layers.}
    \label{fig:architecture}
\end{figure*}

Given a standard decoder-only Transformer with pre-layer normalization (pre-LN) as in Figure~\ref{fig:architecture}, each layer takes the output of the previous layer, performs  calculations (Self-Attention and Feed-Forward), and produces a new vector. This vector is the hidden state. Given an input sequence of $N$ tokens, the hidden state $\mathbf{x}^{l}_n \in \mathbb{R}^d$ represents the representation of token $n$ at layer $l$, where $l \in \{0,1,...,L\}$. The initial hidden state $\mathbf{x}^{0}_n$ is derived from the sum of token and positional embeddings. The model processes the input through $L+1$ transformer layers (from 0th to $L$th layer).

\paragraph{Multi-head self-attention (MHA).}
At layer \(l\), the MHA operation comprises \(H\) attention heads, each with head dimension \(d_h=d/H\). For head \(h \in \{1,...,H\}\), the query, key, and value vectors for token \(n\) are obtained via linear projections of the layer-normalized hidden states:
\[
\begin{aligned}
\mathbf{q}^{l,h}_n &= \mathrm{LN}(\mathbf{x}^{l-1}_n)\mathbf{W}^{l,h}_Q, \\
\mathbf{k}^{l,h}_n &= \mathrm{LN}(\mathbf{x}^{l-1}_n)\mathbf{W}^{l,h}_K, \\
\mathbf{v}^{l,h}_n &= \mathrm{LN}(\mathbf{x}^{l-1}_n)\mathbf{W}^{l,h}_V, \\
\end{aligned}
\]
where \(\mathbf{W}_Q^{l,h},\mathbf{W}_K^{l,h},\mathbf{W}_V^{l,h} \in \mathbb{R}^{d \times d_h}\) are learnable projection matrices. The attention weight that token \(n\) assigns to token \(j\) in head \(h\) is computed as:
\[
\alpha^{l,h}_{n,j} = \frac{\exp((\mathbf{q}_n^{l,h})^{\top}\mathbf{k}_j^{l,h}/\sqrt{d_h})}{\sum_{k=1}^N \exp((\mathbf{q}_n^{l,h})^{\top}\mathbf{k}_k^{l,h}/\sqrt{d_h})}.
\]
The head attention output for the head \(h\) and token \(n\) is the weighted sum of value vectors:
\begin{equation}
\label{eq:MHA}
\mathbf{z}^{l,h}_n = \sum_{j=1}^N \alpha^{l,h}_{n,j} \mathbf{v}^{l,h}_j\in \mathbb{R}^{d_h}.
\end{equation}
The head outputs are concatenated and projected by \(\mathbf{W}_O^{l}\in\mathbb{R}^{d \times d}\) to produce the layer attention output of token \(n\):
\begin{align}
\mathbf{z}^{l}_{n} &= \mathrm{Concat}\left(\mathbf{z}^{l,1}_{n}, \dots, \mathbf{z}^{l,H}_{n}\right) \label{eq:concat_mha} \\
\mathbf{a}^{l}_n &= \mathrm{MHA}^{l}(\mathrm{LN}(\mathbf{x}^{l-1}_n)) = \mathbf{z}^{l}_{n} \mathbf{W}^{l}_O \label{eq:align_with_o}
\end{align}
\paragraph{Feed-forward network (FFN).}
The FFN sublayer at layer $l$ processes the attention output through two linear transformations with a non-linear activation:
\[
\begin{aligned}
    \mathbf{f}^{l}_n &= \mathrm{FFN}^{l}(\mathrm{LN}(\mathbf{x}^{l-1}_n + \mathbf{a}^{l}_n)) \\
    &= \mathrm{GeLU}((\mathrm{LN}(\mathbf{x}^{l-1}_n + \mathbf{a}^{l}_n))\mathbf{W}^{l}_1)\mathbf{W}^{l}_2.
\end{aligned}
\]
where $\mathbf{W}^{l}_1 \in \mathbb{R}^{d \times d_{\text{ff}}}$ and $\mathbf{W}^{l}_2 \in \mathbb{R}^{d_{\text{ff}} \times d}$ are the weight matrices, with $d_{\text{ff}}$ typically set to $4d$.

\paragraph{Residual stream.}
In decoder-only Transformer models, information flows through residual connections. Each token's HS at layer $l$ is computed by adding the outputs of attention and FFN sublayers to the previous layer's HS:
\begin{equation*}
\mathbf{x}^{l}_n = \mathbf{x}^{l-1}_n + \mathbf{a}^{l}_n + \mathbf{f}^{l}_n,
\end{equation*}
where $\mathbf{a}^{l}_n$ and $\mathbf{f}^{l}_n$ are the outputs of the attention and FFN sublayers respectively. 


For LLM-based embeddings, existing methods primarily pool over hidden states $\mathbf{x}^l_n$, with several variants: (1) \textbf{LT} uses the hidden state of the last token from the last layer; (2) \textbf{WMP} applies weighted mean pooling over the hidden states of the last layer; (3) \textbf{HS (Full)} averages hidden states across all layers, while \textbf{HS (Half)} averages over the latter half; and (4) \textbf{HS} aggregates over a subset of layers chosen based on validation performance. Detailed notations are provided in Table~\ref{tab:notations}. Alternatively, we propose \textbf{Value Aggregation (VA)}, which aggregates value vectors $\mathbf{v}^{l,h}_n$ across layers and tokens (see Section~\ref{value aggregation section} for details).

\section{Motivation}

This section shows that HS better aligns with next-token embeddings, while VA is better at capturing sentence semantics within the truth-conditional framework.

\begin{figure*}[t]
\centering
\begin{minipage}{0.32\textwidth}
    \centering
    \includegraphics[width=\textwidth]{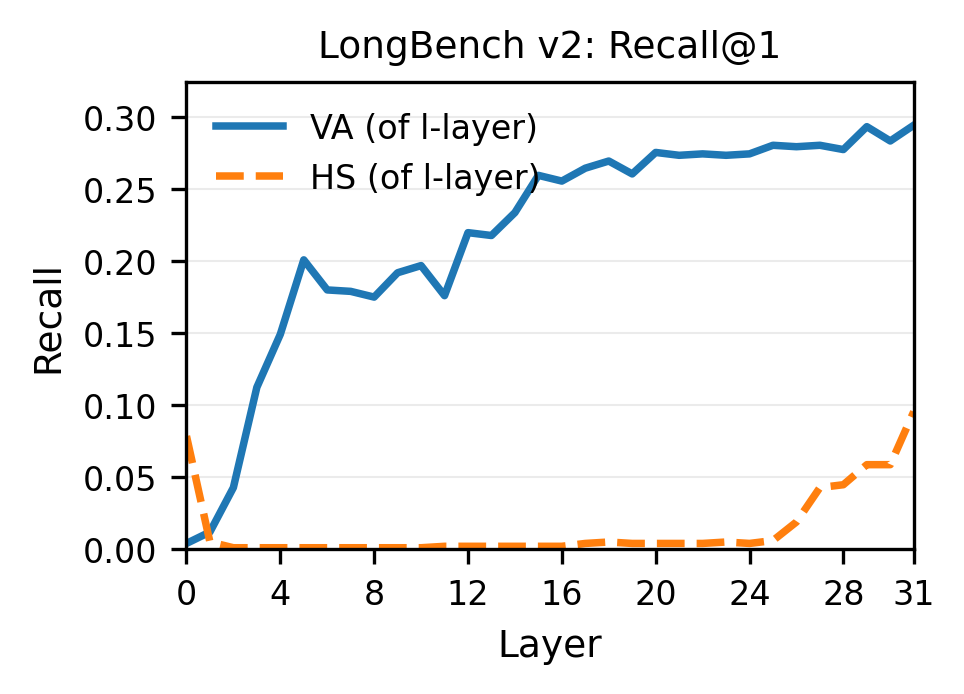}
\end{minipage}
\begin{minipage}{0.32\textwidth}
    \centering
    \includegraphics[width=\textwidth]{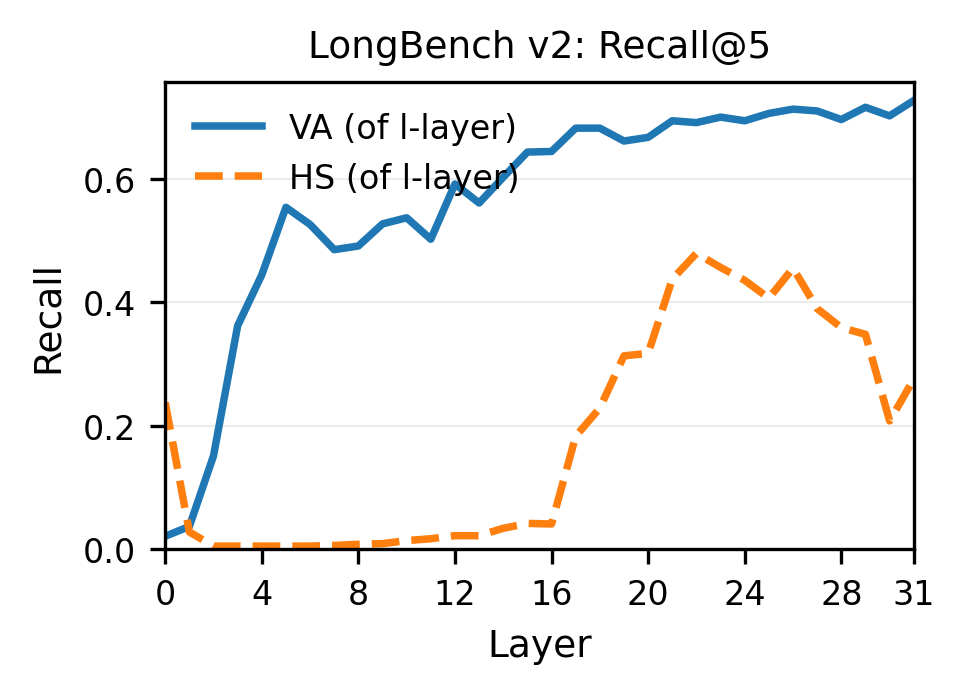}
\end{minipage}
\begin{minipage}{0.32\textwidth}
    \centering
    \includegraphics[width=\textwidth]{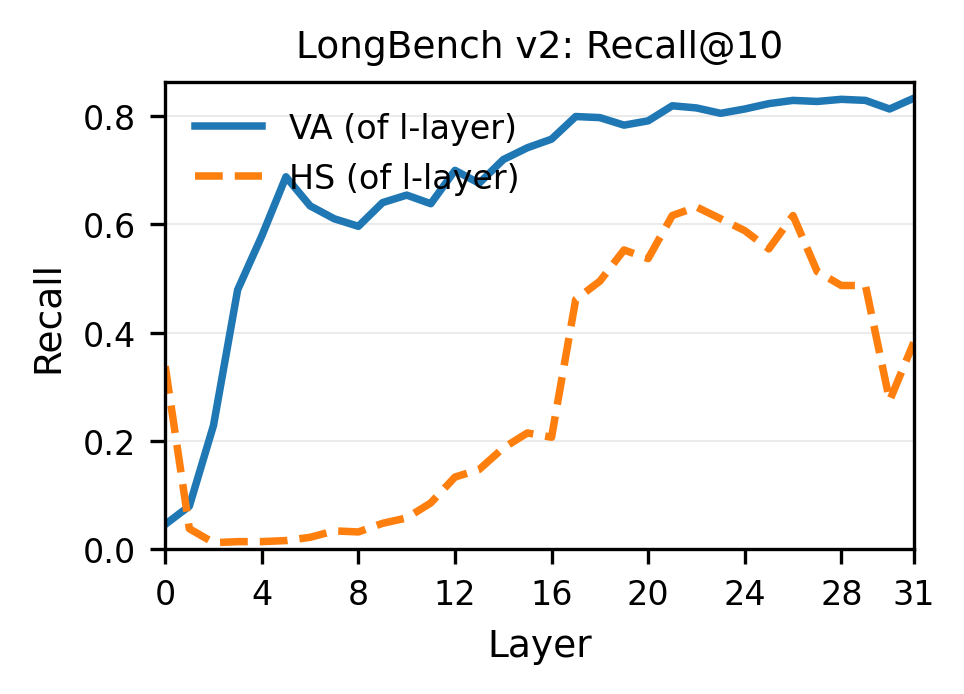}
\end{minipage}
\caption{\textbf{Layer-wise segment matching on LongBench v2.} We split each long sequence into a prefix and suffix segment (split point sampled between $1/4$ and $3/4$ of the token length), and retrieve the matching suffix segment for each prefix segment using embeddings from each layer. Each panel reports recall@k (left: $k{=}1$, middle: $k{=}5$, right: $k{=}10$). VA (of l-layer) improves steadily with depth and outperforms HS (of l-layer)  in deeper layers across all $k$, while the hidden state is competitive only in very early layers.}
\label{fig:split_segment_matching}
\end{figure*}
\subsection{Hidden States Align Next Tokens Embedding}
\label{hidden-state}
In LLMs, the final hidden state at position $t{-}1$, $\mathbf{x}^{L}_{t-1}$, is directly optimized to predict the next token $x_t$. This training objective encourages $\mathbf{x}^{L}_{t-1}$ to encode features that discriminate the correct next token from other vocabulary candidates, rather than features that capture the overall meaning of the entire prefix. Consider two prefixes that share the same local suffix and therefore yield a similar next-token distribution:
\[
\begin{aligned}
s_1 &: \text{``After months of negotiations, the merger was }\\
    &\quad \text{approved, and the CEO said the deal was \underline{done}.''}\\
s_2 &: \text{``After months of recovery, the patient was stable,}\\
    &\quad \text{and the surgeon said the procedure was \underline{done}.''}
\end{aligned}
\]

Immediately before the final token \emph{done}, both prefixes strongly constrain the set of plausible continuations (e.g., \emph{done}, \emph{complete}, \emph{successful}). Because $\mathbf{x}^{L}_{t-1}$ is trained to assign high probability to the correct continuation, the representations $\mathbf{x}^{L}_{t-1}(s_1)$ and $\mathbf{x}^{L}_{t-1}(s_2)$ may become close in the representation space, even though the overall sentence semantics differ substantially.
This can collapse semantically distinct sentences that share similar local continuations, and is therefore not guaranteed to align with sentence similarity. In Appendix~\ref{app:hidden-states-align}, we show that autoregressive pretraining can be interpreted through the lens of contrastive learning: it pulls the context representation toward the unembedding vector of the observed next token while pushing it away from those of other tokens.

\subsection{Value Aggregation Encode Sentence Semantics}
\label{sec:truth-conditional}


\paragraph{Truth-Conditional View of Sentence Similarity.}
Truth-conditional semantics defines the meaning of a sentence through the conditions under which it is true in a given context \cite{davidson1967truth}. This framework provides a formal basis for comparing sentence similarity. Building on this concept, we introduce graded truth assessments, where each context is assigned a probability weight representing the likelihood that the sentence is true in that context. This approach provides a more nuanced understanding of sentence meaning, with different contexts supporting varying degrees of truth. Since the set of possible contexts is unobservable, we use sentence continuations as a proxy for truth conditions, assuming that the probability distribution over continuations approximates the likelihood of the corresponding truth conditions. This assumption links continuation probabilities to truth conditions and forms a probabilistic framework for sentence-level semantics. Further details are provided in Appendix~\ref{appendix:truth-condition semantic}.

\begin{hypothesis}[Value Aggregation as a Better Proxy for Sentence Semantics]
\label{hyp:va-better-cont}
If the continuation distribution serves as a proxy for truth conditions, then embeddings that more accurately represent this distribution yield better sentence representations. In particular, we hypothesize that value aggregation captures continuation distributions more faithfully than hidden-state embeddings. Representations based on value aggregation outperform those based on hidden states in modeling the continuation distribution.
\end{hypothesis}

\paragraph{Empirical Verification.}
We test Hypothesis~\ref{hyp:va-better-cont} using a segment-matching retrieval task. From a long text, we sample a split point uniformly from the middle half of the token positions (between $1/4$ and $3/4$ of the length), producing a \emph{prefix} segment and a \emph{suffix} segment. For each layer, we compute embeddings for all prefix segments and all suffix segments, form a similarity matrix, and evaluate recall@k where the correct match for prefix segment $i$ is suffix segment $i$. We use LongBench v2 because it contains long sequences, ensuring both segments contain enough tokens to carry meaningful content.

Figure~\ref{fig:split_segment_matching} presents the layer-wise recall@1/5/10 results for VA and HS. Here, “layer-wise” means that HS (or VA) is obtained by mean-pooling token representations within each layer for comparison. VA performance increases monotonically with depth and consistently surpasses HS in the deeper layers across all three metrics. In contrast, HS achieves higher scores only in the earliest layers but declines rapidly and recovers only partially in later layers. These findings support Hypothesis~\ref{hyp:va-better-cont}, indicating that VA provides a more reliable signal for matching longer continuations than HS.

\section{Value Aggregation for Text Embeddings}
\label{value aggregation section}
This section first provides a formal description of Value Aggregation (VA), followed by a detailed explanation of the layer selection process for pooling value vectors.

\subsection{Value Aggregation Method}
For each layer \(l\) and token \(n\), we concatenate the head-level value vectors to obtain a \(d\)-dimensional token embedding:
\[
\mathbf{v}^{l}_n = \big[ \mathbf{v}^{l,1}_n; \dots; \mathbf{v}^{l,H}_n \big] \in \mathbb{R}^{d}.
\]
We then summarize the sequence at layer \(l\) by mean pooling over tokens:
\[
\mathbf{\hat{v}}^{l} = \frac{1}{N}\sum_{n=1}^{N} \mathbf{v}^{l}_n \in \mathbb{R}^{d}.
\]
Let \(\mathcal{S} \subseteq \{1,\dots,L\}\) be the set of layers used for aggregation. The VA sentence embedding is:
\[
V_{\text{agg}}(x_{1:N}) = \frac{1}{|S|}\sum_{l\in \mathcal{S}} \mathbf{\hat{v}}^{l}\in \mathbb{R}^{d}.
\]
It should be noted that VA requires only a standard forward pass to read \(\{\mathbf{v}^{l,h}_n\}\) from each attention layer without any extra prompt and calculation. The aggregation itself is mean pooling over tokens and layers, where the layer selection $S$ is selected according to the following section.

\subsection{Layer Selection for VA}
Prior studies on encoder-only LMs reveal a general progression from surface-level features in early layers to higher-level linguistic representations in deeper layers \cite{de-vries-etal-2020-whats, jawahar-etal-2019-bert}. 
For autoregressive LLMs, \citet{queipodellano2025attentionsinkscompressionvalleys} further observe that strong single-layer sentence embeddings typically emerge near the latter part of the network (around 
85\% depth). Building on these insights, we conduct an empirical study to identify suitable layers for VA aggregation and use the results to define a default aggregation range $S$.

\paragraph{Experimental Setup}

We evaluate \emph{single-layer} VA by setting \(S=\{l\}\) for each layer \(l\), and then compute  performance using the corresponding embedding for multiple tasks. 
This yields a performance–depth curve that illustrates how task performance varies across layers.
To mitigate selection bias, we ensure that the data used for layer selection does not overlap with the final evaluation test sets (see Section~\ref{sec:main-experiments}). Specifically, we use: (i) the test sets of BiorxivClusteringP2P.v2 and MedrxivClusteringP2P.v2 for clustering; (ii) SciDocsRR validation and AskUbuntuDupQuestions test sets for reranking; (iii) STS15 test and STSBenchmark validation for semantic textual similarity (STS); (iv) EmotionClassification and Banking77Classification training sets for classification; and (v) SciFact training and NFCorpus validation sets for retrieval. 
We evaluate our method on two families of LLM backbones: LLaMA-2 and Qwen-3. While LLaMA-2 employs standard multi-head attention, Qwen-3 uses grouped-query attention.

\paragraph{Results and Analysis}
Figure~\ref{fig:llama_and_qwen_layer} reports the layer-wise results for LLaMA-2 (7B) and Qwen-3 (8B), with the best-performing layer for each task indicated. Although the optimal single layer varies across tasks, it tends to lie within a relatively narrow range. For LLaMA-2, performance generally increases smoothly with depth, with task-specific peaks emerging in the late-middle layers. In contrast, Qwen-3 shows a less consistent pattern: for most tasks (except classification), performance rises with depth, slightly decreases in the mid-to-late depth, and then increases again near the last layers. Notably, reranking performance peaks much earlier in Qwen-3, whereas deeper layers are more beneficial for retrieval and STS tasks.

\begin{figure}[t]
\centering
\begin{minipage}{0.235\textwidth}
    \centering
    \includegraphics[width=\textwidth, height=3cm]{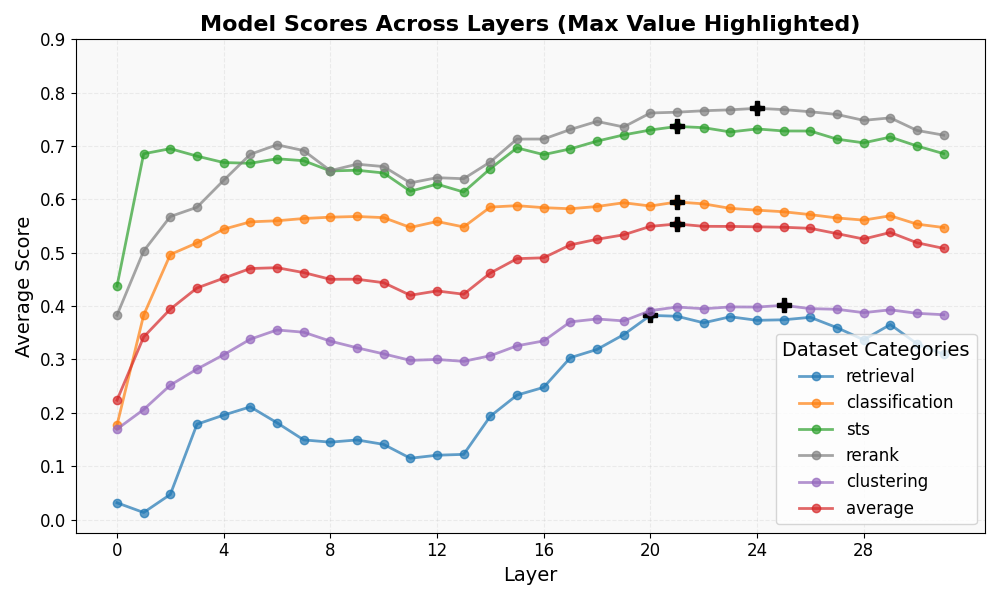}
\end{minipage}
\hfill
\begin{minipage}{0.235\textwidth}
    \centering
    \includegraphics[width=\textwidth, height=3cm]{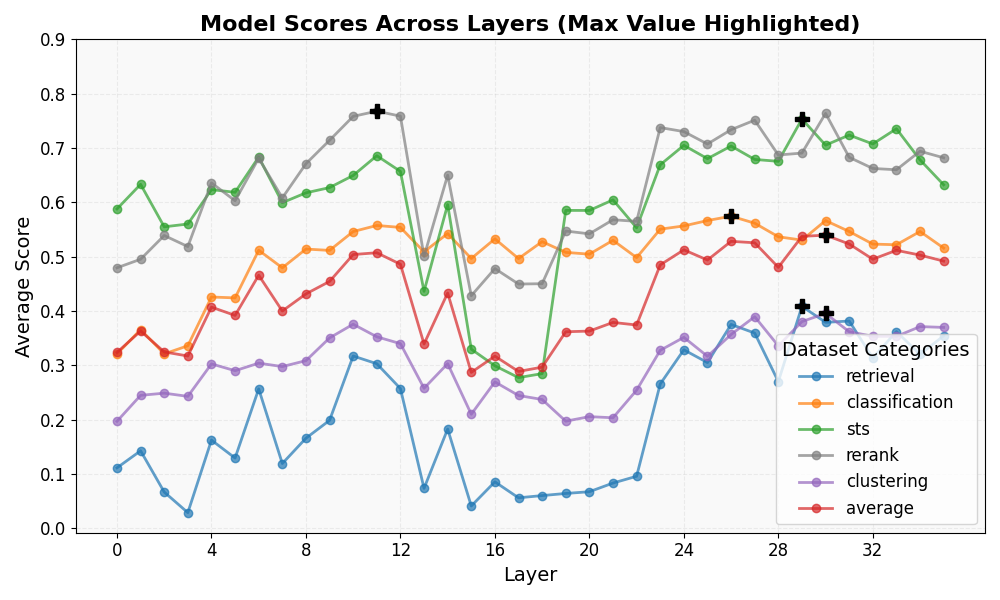}
\end{minipage}
\caption{Score by Layers for Llama (Left) and Qwen (Right)}
\label{fig:llama_and_qwen_layer}
\end{figure}

These observations highlight two key points. First, relying on a single fixed layer is brittle, as different tasks favor representations from different depths. Second, aggregating a small set of strong layers mitigates sensitivity to any individual layer while retaining the advantages of deep-layer representations. We therefore determine the default layer set $S$ by selecting layers that consistently perform well on the retrieval task (the most challenging task) while excluding those that repeatedly degrade performance across tasks. For LLaMA-2, this selection corresponds to a contiguous deep-middle range (layers \textit{20--27}). For Qwen-3, the strongest layers are concentrated nearer the last layer, and we use layers (\textit{26, 27, 29, 30, and 31}). Unless otherwise specified, VA aggregates value vectors over these model-specific sets.

\begin{table*}[t]
\centering
\caption{Performance comparison of different models on a subset of MTEB tasks. ``Prompt-free'' methods do not append any additional semantic prompt to the input (beyond the standard evaluation instruction), whereas ``Prompt-based'' methods introduce extra semantic prompts. Our method is prompt-free. MetaEOL is an ensemble that uses eight prompts and therefore requires eight forward passes per example. Echo Embedding (EE) repeats the input sequence and can be viewed as a self-prompting strategy.
}
\resizebox{\textwidth}{!}{
\begin{tabular}{lccccccccccccccccc}
\toprule
\multirow{2}{*}{\raisebox{-2ex}{\textbf{\centering Model}}} & \multirow{2}{*}{\raisebox{-2ex}{\makecell{\textbf{Dim}}}}& \multirow{2}{*}{\raisebox{-2ex}{\makecell{\textbf{Backbone}}}} & \multicolumn{3}{c}{Clustering} & \multicolumn{3}{c}{Retrieval} & \multicolumn{3}{c}{STS} & \multicolumn{3}{c}{Classification} & \multicolumn{2}{c}{Reranking} & \multicolumn{1}{c}{\multirow{2}{*}{\raisebox{-2ex}{Avg.}}} \\
\cmidrule(lr){4-6}\cmidrule(lr){7-9}\cmidrule(lr){10-12}\cmidrule(lr){13-15}\cmidrule(lr){16-17} 
 & & &\makecell{Bior.} & \makecell{Medr.} & \makecell{Twen.} & 
\makecell{SciF.} & \makecell{NFCo.} & \makecell{Argu.} & 
\makecell{STS17} & \makecell{SICK-R} & \makecell{STSB.} & 
\makecell{Bank.} & \makecell{Emot.} & 
\makecell{Spri.} & 
\makecell{Stac.} & \makecell{SciD.} &  \\
\midrule
\multicolumn{18}{c}{\textbf{Prompt-free Methods}}\\
\midrule
\textbf{HS (Full)}& 4096 & Llama-2 & 13.69 & 19.86 & 5.67 & 0.10 & 1.43 & 44.11 & 43.90 &44.46 & 11.18 &64.24 & \textbf{39.34} &58.41 & 32.38 & 58.69 & 31.25\\
\textbf{HS (Half)}& 4096 & Llama-2 & 19.86 & 22.68 & 6.30 & 0.23 & 1.46 & 47.8 &44.21 &46.64 &12.15 &67.58 &38.67 &61.83 &33.04 &60.79& 32.37\\
\textbf{HS} & 4096 & Llama-2 & \textbf{20.28} & \textbf{23.01} & 6.04 & 0.26 & 1.52 & \textbf{47.51} & 44.17 & 46.46 &12.33& 66.46 & 38.93 & \textbf{62.89} & 33.03 & \textbf{61.12} & 33.14\\
\textbf{HS (Full)}& 4096 & Qwen-3 & 7.42 & 17.47 & 5.27 & 0.17 & 1.58 &40.71 & 61.05 &  38.97 & 23.54 & 58.95 & 34.85 & 46.83 & 28.41& 53.08 & 29.88\\
\textbf{HS (Half)}& 4096 & Qwen-3 & 11.55 & 19.57 & 5.77 & 0.67 & 1.75 & 43.27 & 62.53 & 40.94 & 27.84 & 60.61& 34.69 & 53.14 & 29.25 & 54.98 &31.90\\
\textbf{HS} & 4096 & Qwen-3 & 12.87 & 20.73 & 5.72 & 0.71 & 1.87 & 42.50 & \textbf{64.11} & 42.54 & 28.06  & 61.61 & 34.62 & 56.40 & 31.16 & 58.10& 32.93\\
\textbf{LT} & 4096 & Llama-2& 15.99 & 17.42 & \textbf{15.96} & 2.17 & 1.31 & 14.24 & 57.8 & 55.63 & 45.72 & \textbf{68.65} & 29.85 & 47.01 & 32.07 & 58.83 & 33.05 \\  
\textbf{WMP} & 4096 & Llama-2 & 19.73 & 19.47 & 14.54 & \textbf{38.89} & \textbf{6.13} & 33.59 & 63.91 & \textbf{57.52} & \textbf{58.01} & 66.42 & 30.97 & 58.48 & \textbf{37.74} & 61.05 & \textbf{40.46} \\
\textbf{LT} & 4096 & Qwen-3 &  13.56 &  16.22 & 13.57 & 5.06 & 1.57 & 8.42 & 38.56 & 41.48 & 28.53 & 55.82 & 29.80 & 9.77 & 25.25 & 47.35 & 23.93\\
\textbf{WMP} & 4096 & Qwen-3 &  9.44 & 15.82 & 8.19 & 5.15 & 1.41 & 22.12 & 51.10 & 44.17 & 34.08 & 60.52 & 30.01 & 28.67 & 32.04 & 49.74 & 28.03\\
\midrule
\multicolumn{18}{c}{\textbf{Prompt-based Methods}}\\
\midrule
\textbf{EE} & 4096 & Llama-2 & 22.94 & 23.15 & 25.74 & 25.61 & 9.97 & 25.24 & 80.51 & 70.18 & 71.94 & 81.79 & 45.00 & 68.48 & 40.79 & 60.15 & 46.54 \\
\textbf{PrompEOL} & 4096 & Llama-2 & 22.49 & 21.14 &  31.47 & 27.16 & 13.59 & 11.65 & 79.67 & 73.82 & 75.32 & 76.37 & 47.13 & 26.08 & 37.65 & 66.22 & 43.55 \\
\textbf{MetaEOL} & 4096 & Llama-2 & \textbf{30.95} & 26.56 & \textbf{40.03} & 40.59 & 16.41 & 21.75 & \textbf{82.29} & \textbf{76.88}&\textbf{76.87} & \textbf{82.26} & 51.05 & 48.24 & 39.87 & \textbf{77.91} & 50.83 \\
\textbf{PrompEOL + CP} & 4096 & Llama-2 & 22.72 & 21.26&28.98  & 33.42& 17.43&14.57 & 80.93&72.69 & 74.73 & 77.51 & 47.70 &28.42 & 37.54& 66.55 & 44.60\\
\textbf{PromptEOL} & 4096 & Qwen-3 &  23.43 & 23.57 & 27.70 & 18.27 & 4.56 & 12.12 & 72.84 & 67.98 & 67.80 & 70.06 & 47.22 & 36.49 & 38.91 & 73.85 & 41.77\\
\textbf{MetaEOL} & 4096 & Qwen-3 & 29.21 & \textbf{27.01} & 36.74 & \textbf{47.59} & 12.90 & \textbf{30.81} & 80.72 & 74.24 & 71.65 & 81.90 & \textbf{52.55} & \textbf{73.88} & \textbf{41.41} & 77.53 & \textbf{52.72}\\
\textbf{PrompEOL + CP} & 4096 & Qwen-3 & 27.66 & 24.67 & 34.76 & 27.35 & \textbf{18.13} & 14.84 & 74.14 & 69.38 & 73.22 & 71.14 & 48.66 & 22.15 & 40.60 & 75.90 & 44.47\\
\midrule
\multicolumn{18}{c}{\textbf{Our Methods}}\\
\midrule

\textbf{VA (Full)} & 4096 & Llama-2 & 31.69 & 28.30 & 26.34 & 51.28 & 21.00 & 42.75 & 74.51 & 61.47 & 60.94 & 73.89 & 39.58 & 71.42 & 41.63 & 76.16 & 50.07\\
\textbf{VA (Half)} & 4096 & Llama-2 & 32.45& 28.65&27.95 & 52.41&23.52 & 44.26& 74.08& 61.49& 61.72& 75.19& 39.54& 73.75&41.51 & 76.70 & 50.94\\
\textbf{VA} & 4096 & Llama-2 & \textbf{33.13}& \textbf{29.56}& \textbf{30.59}& 54.58 & \textbf{25.89} & \textbf{45.76} & 75.37 & 61.92& 63.03 & 76.15 & \textbf{39.85} & 75.97 & 42.08 & \textbf{77.59}&  \textbf{52.25}\\
\textbf{VA (Full)} & 1024 & Qwen-3 & 26.85 & 24.65 & 20.59 & 58.37 & 18.69 & 41.41 & 71.70 & 60.38 & 60.77 &75.72 & 33.20 & 81.92 & 44.72 &70.45 & 49.24\\
\textbf{VA (Half)} & 1024 & Qwen-3 & 26.94 & 24.29 & 21.92 & 58.29 & 18.80 & 41.49 & 71.83 & 60.47 & 60.91 & 75.71 & 32.91 & \textbf{82.10} & 44.79 &  70.44 & 49.35\\
\textbf{VA} & 1024 & Qwen-3 & 31.46 & 26.61 & 25.35 & \textbf{58.93} & 21.84 & 43.11 & \textbf{76.28} & \textbf{62.98} & \textbf{63.24} & \textbf{76.49} & 35.65 & 81.29 & \textbf{45.51} & 73.48 & 51.59\\
\bottomrule
\end{tabular}
}

\label{tab:main_results}
\end{table*}

\begin{table*}[t]
\centering
\caption{Performance of alternative value aggregation strategies evaluated on various tasks from the MTEB benchmark.}
\resizebox{\textwidth}{!}{
\begin{tabular}{lccccccccccccccccc}
\toprule
\multirow{2}{*}{\raisebox{-2ex}{\textbf{\centering Model}}} & \multirow{2}{*}{\raisebox{-2ex}{\makecell{\textbf{Dim}}}}& \multirow{2}{*}{\raisebox{-2ex}{\makecell{\textbf{Backbone}}}} & \multicolumn{3}{c}{Clustering} & \multicolumn{3}{c}{Retrieval} & \multicolumn{3}{c}{STS} & \multicolumn{3}{c}{Classification} & \multicolumn{2}{c}{Reranking} & \multicolumn{1}{c}{\multirow{2}{*}{\raisebox{-2ex}{Avg.}}} \\
\cmidrule(lr){4-6}\cmidrule(lr){7-9}\cmidrule(lr){10-12}\cmidrule(lr){13-15}\cmidrule(lr){16-17} 
 & & &\makecell{Bior.} & \makecell{Medr.} & \makecell{Twen.} & 
\makecell{SciF.} & \makecell{NFCo.} & \makecell{Argu.} & 
\makecell{STS17} & \makecell{SICK-R} & \makecell{STSB.} & 
\makecell{Bank.} & \makecell{Emot.} & 
\makecell{Spri.} & 
\makecell{Stac.} & \makecell{SciD.} &  \\
\midrule
\multicolumn{18}{c}{\textbf{Baselines}}\\
\midrule
\textbf{PrompEOL} & 4096 & Llama-2 & 22.49 & 21.14 &  31.47 & 27.16 & 13.59 & 11.65 & 79.67 & 73.82 & \textbf{75.32} & 76.37 & 47.13 & 26.08 & 37.65 & 66.22 & 43.55 \\
\textbf{VA} & 4096 & Llama-2 & \textbf{33.13}& \textbf{29.56}& 30.59& 54.58 & \textbf{25.89} & 45.76 & 75.37 & 61.92& 63.03 & 76.15 & 39.85 & 75.97 & 42.08 & \textbf{77.59}&  52.25\\
\textbf{VA (PromptEOL)} & 4096 & Llama-2 & 32.04 & 28.20 & 33.00 & 49.90 & 8.75 & \textbf{46.61} & 79.03 & 65.63 & 63.58 & 77.00 & 44.46 & 77.42 & 41.29 & 76.40 & 51.67\\
\textbf{MetaEOL} & 4096 & Qwen-3 & 29.21 & 27.01 & \textbf{36.74} & 47.59 & 12.90 & 30.81 & \textbf{80.72} & \textbf{74.24} & 71.65 & \textbf{81.90} & \textbf{52.55} & 73.88 & 41.41 & 77.53 & \textbf{52.72}\\
\textbf{VA} & 1024 & Qwen-3 & 31.46 & 26.61 & 25.35 & \textbf{58.93} & 21.84 & 43.11 & 76.28 & 62.98 & 63.24 & 76.49 & 35.65 & \textbf{81.29} & \textbf{45.51} & 73.48 & 51.59\\

\midrule
\multicolumn{18}{c}{\textbf{Weighted Value Aggregation}}\\
\midrule
\textbf{WVA (LT)}& 4096 & Llama-2  &  15.83 & 16.33 & 11.48 & 19.75 & 3.04 & 16.39 & 63.58 & 57.01 & 58.69 & 62.28 & 29.31 &  13.68&28.78 & 55.62& 32.27\\
\textbf{WVA (PromptEOL)}& 4096 & Llama-2  &  27.40 & 22.14 &  30.67 & 45.32 & \textbf{24.22} & 31.27 & 85.02 & \textbf{73.98} & \textbf{77.11} & 81.15 & 53.66 & 57.21 & \textbf{43.07} & 75.44 & 51.98\\
\textbf{WVA (FutureEOL)}& 4096 & Llama-2  &  \textbf{30.46} & \textbf{25.21} &  35.04 & \textbf{51.80} & 21.54& \textbf{37.95} & \textbf{85.04} & 72.60 & 74.64 & \textbf{81.54} & \textbf{53.86} & \textbf{75.83} & 42.86 & \textbf{77.22} & \textbf{54.69}\\
\textbf{WVA (LT)}& 1024 & Qwen-3 & 25.56 & 21.72 & 18.51 & 6.44 & 8.21 & 13.82 & 52.41 & 64.98 & 42.02 & 58.88 & 35.89 & 15.54 & 33.42 & 63.43 & 32.92\\
\textbf{WVA (PromptEOL)}& 1024 & Qwen-3 & 26.18 & 23.71 & 29.30 & 14.70 & 15.80 & 29.63 & 79.32 & 70.41 & 72.33 & 79.67 & 49.96 & 33.95 & 41.36 & 63.99 & 45.02\\
\textbf{WVA (FutureEOL)}& 1024 & Qwen-3 & 26.22 &  23.96 & \textbf{37.14} & 24.82 & 17.17 & 26.85 & 80.14 & 71.07 & 73.83 & 75.79 & 48.11 & 22.15 & 42.06 & 69.98 & 45.66\\
\midrule
\multicolumn{18}{c}{\textbf{Aligned Weighted Value Aggregation}}\\
\midrule
\textbf{AlignedWVA (PromptEOL)}& 4096 & Llama-2  &  28.82 & 23.45 & 31.20 & 45.31 & 25.13 & 32.44 & 83.33 & 73.80 & 76.74 & 82.09 & 52.68 & 60.93 & 43.52 & 75.76 & 52.51\\
\textbf{AlignedWVA (FutureEOL)}& 4096 & Llama-2  & 31.25 & 26.39 & 33.82 & 51.40 & 25.32 & 39.67 & 83.38 &  71.54 & 73.36& \textbf{82.62} & 52.78 & \textbf{78.42} & 43.39 & 76.44& 54.98 \\
\textbf{AlignedWVA (PromptEOL)}& 4096 & Qwen-3  & \textbf{35.56} & \textbf{29.73} & \textbf{48.04} & 55.44 & 31.55 & 34.53 & 83.06 & \textbf{76.14} & \textbf{77.48} & 81.52 & \textbf{54.18} & 59.54 & \textbf{45.88} & \textbf{83.01} & 56.83\\
\textbf{AlignedWVA (FutureEOL)}& 4096 & Qwen-3  & 33.62 & 28.03 & 44.51 & \textbf{62.98} & \textbf{32.04} & \textbf{42.88} & \textbf{84.31} & 68.55 & 75.53 & 82.06 & 51.75 & 69.50 & 43.26 & 76.79 & \textbf{56.84}\\
\bottomrule
\end{tabular}
}
\label{tab:alternative_method_performance}
\end{table*}

\section{Evaluation of VA-based Embeddings}
\label{sec:main-experiments}

\subsection{Experimental Setup}
\paragraph{Compared Methods}
We compare our method against several established approaches for sentence representation:
LT and WMP \cite{DBLP:journals/corr/abs-2202-08904} use the last token or weighted mean pooling of the last layer for representation. HS denotes hidden-state pooling baselines defined in Section~\ref{sec:preliminaries} and summarized in Appendix \ref{Transformer notation}. EE \cite{DBLP:conf/iclr/SpringerKFNR25} duplicates the input and applies mean-pooling of the last hidden states in the second copy, allowing the preceding tokens (in the second copy) attend to the later tokens (of the first copy) in causal LMs. PromptEOL \cite{DBLP:conf/acl/LeiW00CTY24} exploits the prompt \textit{'This sentence: \{sentence\} means in one word:"'} as a way to fuse sentence semantics to the last token. MetaEOL \cite{DBLP:conf/acl/LeiW00CTY24} is built on this idea but uses eight ChatGPT-4-designed meta-task prompts. Recently, CP \cite{DBLP:conf/acl/ChengWFJ00025} is also built upon PromptEOL by adding an auxiliary prompt to elicit more informative embeddings. We categorize EE, PromptEOL, MetaEOL, and PromptEOL+CP as prompt-based methods, distinguishing them from prompt-free methods such as LT, HS, and WMP. Although they differ in design (with or without prompts), all these baselines use hidden states in different ways.

\paragraph{Evaluation Datasets}
We evaluate on \textit{14 downstream tasks} across \textit{six} dataset groups, covering clustering, retrieval, semantic textual similarity (STS), classification, and reranking. This setup allows us to assess how well different pooling schemes generalize across a broad set of sentence embedding applications. To ensure that our ablation studies and analyses are not biased toward a particular category or task, this subset was constructed to include tasks from each category in proportions that approximately match those in the full MTEB benchmark. The evaluation metrics followed the MTEB standard \cite{DBLP:conf/eacl/MuennighoffTMR23}: Accuracy for classification tasks, V-measure for clustering, NDCG@10 for retrieval, MAP for reranking, and Spearman correlation for STS. Detailed information in Table ~\ref{tab:mteb-subset} in the Appendix.

\paragraph{LLM Backbone Models}
We use LLaMA-2 (7B) and Qwen-3 (8B), consistent with Section~\ref{value aggregation section}. Notably, due to the grouped-query attention (GQA) mechanism in Qwen-3 8B, value projections are shared across four query groups. As a result, each value vector has one fourth of the hidden-state dimension, corresponding to 1024 dimensions for VA compared to 4096 for HS. 

\subsection{Results and Discussion} 

\paragraph{Main Result Analysis} 

Table \ref{tab:main_results} shows the performance of VA in comparison with different baselines. \textit{On average, VA (LLaMA-2) outperforms all prompt-free methods and most prompt-based methods, falling short only to MetaEOL (Qwen-3)}. Notably, VA is a prompt-free method that incurs no additional inference cost, as it does not increase sentence encoding time. In contrast, MetaEOL requires eight forward passes, each corresponding to a distinct prompt, resulting in roughly an 8$\times$ higher encoding time compared to VA. The largest gains of VA are observed in retrieval tasks. For instance, while MetaEOL achieves the best overall performance on SciFact, VA surpasses it by 10 points on the same task. In semantic textual similarity (STS), prompt-based methods still hold an advantage; however, VA, as a prompt-free method, substantially narrows the gap. For example, among prompt-free baselines, WMP achieves the highest STS performance, yet VA exceeds it across all STS datasets, with more than a 10-point improvement on STS17. These conclusions also hold for Qwen-3, indicating that the effect generalizes across different backbone families and attention mechanisms. Notably, VA (Qwen-3) achieves highly competitive performance while having only one-fourth of the dimensionality, making it substantially more efficient in both sentence encoding and similarity computation.

\paragraph{VA Layer Selection Analysis} 
Across both Qwen-3 and LLaMA-2, selecting a subset of layers using the strategy described in Section~\ref{value aggregation section} consistently enhances VA performance. Aggregating from deeper layers (VA \textbf{(Half)}) also tends to outperform pooling over all layers (VA \textbf{(Full)}). With the selected-layer configuration, VA achieves an average improvement of about 2 points compared to full-layer aggregation (VA \textbf{(Full)}) on both backbones, with the largest gains observed in clustering and retrieval tasks. Restricting aggregation to the last half of layers produces a smaller but generally positive effect with both LLM backbones.

\section{Alternative Strategies for Value Aggregation}


\subsection{From VA to Weighted VA (WVA)}
Our default VA applies mean pooling over tokens. This design is simple and parameter-free, but prior work shows that token weighting can substantially influence embedding quality \cite{DBLP:journals/corr/abs-2202-08904}. This motivates the \emph{Weighted VA} (WVA) variant, in which aggregation assigns non-uniform importance to tokens. This section reports downstream performance, while Appendix~\ref{logit_lens} provides a probing-based interpretation of why weighting helps: when token weights align with attention patterns that support continuation, weighted pooling of value vectors better preserves information predictive of subsequent tokens.

\paragraph{Weighting Strategies}
A simple weighting strategy is to use the attention weights $\alpha^{l,h}_{n,j}$ produced by the last token $\mathbf{x}_n$, which describe how the last token attends to each prefix token. Under standard multi-head attention, this weighted value aggregation is exactly the head-wise output for the last token, and its concatenated form corresponds to $\mathbf{z}^{l}_{n}$ in Equations~\ref{eq:MHA} and~\ref{eq:concat_mha}. For models with grouped-query attention (GQA), the number of query heads and key-value heads may differ. We therefore compute token-specific weights by mean-pooling the last-token attention weights across query heads, and then multiply the prefix value vectors by these mean-pooled weights. We refer to this method as \emph{Weighted Value Aggregation using Last Token}, denoted as \textbf{WVA (LT)}.


Directly using the head attention output $\mathbf{z}^{l}_{n}$ of the last token as a sentence representation suffers from the same limitation as using the last token's hidden state (LT): the final token alone does not capture the full-sentence semantics. Inspired by PromptEOL, we wrap the sentence with the PromptEOL prompt template, thereby encouraging the model to fuse sentence-level information into the final token. We then use $\mathbf{z}^{l}_{n}$ as the sentence representation. This approach is referred to as \textbf{WVA (PromptEOL)}.

Inspired by Section~\ref{sec:truth-conditional}, which shows that sentence semantics are determined by continuation, we propose \emph{FutureEOL}, which aims to capture the continuation of a sentence using the prompt \textit{'Forecasting the subsequent tokens: \{sentence\} in one word:"'}. We then apply this prompt within the WVA framework, referring to this method as \textbf{WVA (FutureEOL)}.

\paragraph{Experimental Results and Discussion} Table~\ref{tab:alternative_method_performance} presents the experimental results. As shown, WVA (FutureEOL) with LLaMA-2 surpasses the strong MetaEOL baseline by 2 points on average. Compared with WVA (LT), which assigns weights to the value vectors of all tokens based on next-token prediction, WVA (PromptEOL)—which explicitly focuses the model on predicting the next token through a prompt—achieves an overall improvement of nearly 20 points, highlighting the importance of using meaningful prompts in WVA. In nearly all tasks, WVA (PromptEOL) outperforms WVA (LT) by about 10 points. Furthermore, transitioning from WVA (PromptEOL) to WVA (FutureEOL) yields additional gains, particularly on clustering and retrieval datasets (except \textit{NFCor}). On STS datasets, WVA (FutureEOL) shows a slight decrease, while in classification tasks, it brings modest improvements on \textit{Bank} and \textit{Emot} but a substantial gain on \textit{Spri}.

\subsection{From WVA to Aligned WVA}
WVA with prompting operates in the attention value space (concatenated across heads), whereas many baselines—including EOL-style representations—operate in the model's residual-stream space. To align these spaces, we introduce an \emph{Aligned WVA} variant that takes the signal after the output projection $\mathbf{W}^{l}_O$, which maps the weighted aggregation back to the residual stream, as defined in Equation~\ref{eq:align_with_o}. We further derive different variants of AlignedWVA using \emph{PromptEOL} and \emph{FutureEOL}.

\paragraph{Results and Discussion}
AlignedWVA improves upon WVA for both LLaMA-2 and Qwen-3 under both PromptEOL and FutureEOL. As expected, the alignment provides a substantially larger benefit on Qwen-3, improving performance by over 10 points on average for both prompts. Notably, FutureEOL with AlignedWVA surpasses the ensemble-based MetaEOL baseline on both backbones by a considerable margin. In particular, the best-performing variant, AlignedWVA (FutureEOL) with Qwen-3, exceeds MetaEOL with Qwen-3 by more than 4 points on average. It is noteworthy that AlignedWVA incurs significantly low cost for encoding, as it does not need to pair each encoded sentence with different prompts as in MetaEOL.

\section{Training LLM Embedding Models by Finetuning VA}

Recent LLM-based embedding models (e.g., LLM2Vec \cite{DBLP:journals/corr/abs-2404-05961}) are typically trained by fine-tuning hidden-state representations using contrastive learning. To examine whether contrastive fine-tuning can similarly enhance VA embeddings, we fine-tune the same LLama-2 (7B) backbone with LoRA, applying an identical set of target modules for both VA and the hidden-state mean-pooling baseline (\textbf{Finetune-MP}). Specifically, we apply LoRA to the self-attention projections $\mathbf{W}^{l}_Q, \mathbf{W}^{l}_V, \mathbf{W}^{l}_O$ and the feed-forward projections $\mathbf{W}^{l}_1, \mathbf{W}^{l}_2$. Under this controlled setup, the only distinction between Finetune-VA and Finetune-MP lies in the pooling operator used to obtain the sentence embedding: Finetune-VA pools attention value vectors across tokens and layers, whereas Finetune-MP performs mean pooling over last-layer hidden states.

We also evaluate a lighter variant, \textbf{Finetune-VA (Attention Only)}, which applies LoRA exclusively to the self-attention modules $\mathbf{W}^{l}_Q, \mathbf{W}^{l}_K, \mathbf{W}^{l}_V, \mathbf{W}^{l}_O$ and the corresponding layer-normalization weights. This configuration reduces the number of trainable parameters to less than one quarter of that in \textbf{Finetune-MP}, while keeping all other training settings identical.


\paragraph{Training Objective}  We perform supervised contrastive learning using annotated positive pairs. Negative samples are constructed via in-batch sampling and hard-negative sampling, where hard negatives are pre-selected using the E5 data with a cross-encoder model \cite{DBLP:journals/corr/abs-2402-05672}. Following {LLM2Vec}, we use 1{,}024{,}000 training examples drawn from the public portions of its training datasets.

\paragraph{Results and Discussion}

Figure~\ref{fig:finetune_bar} shows that Finetune-VA improves VA embeddings and yields consistent gains over Finetune-MP. The improvements are greatest on reranking and STS, with gains of 3.8 and 2.02 points, respectively. We observe small degradations on clustering and classification task, suggesting that value-based training primarily benefits tasks that rely on fine-grained semantic matching, while the effect is weaker for category-level discrimination. Finetune-VA (Attention Only) can achieve almost a consistent effect only training 1/4 of the parameters, which suggests the advantages of finetuning VA. \textit{Future research should focus on developing more effective training strategies to fully exploit the representation advantages of VA.}

\begin{figure}
    \centering
    \includegraphics[width=0.98\linewidth]{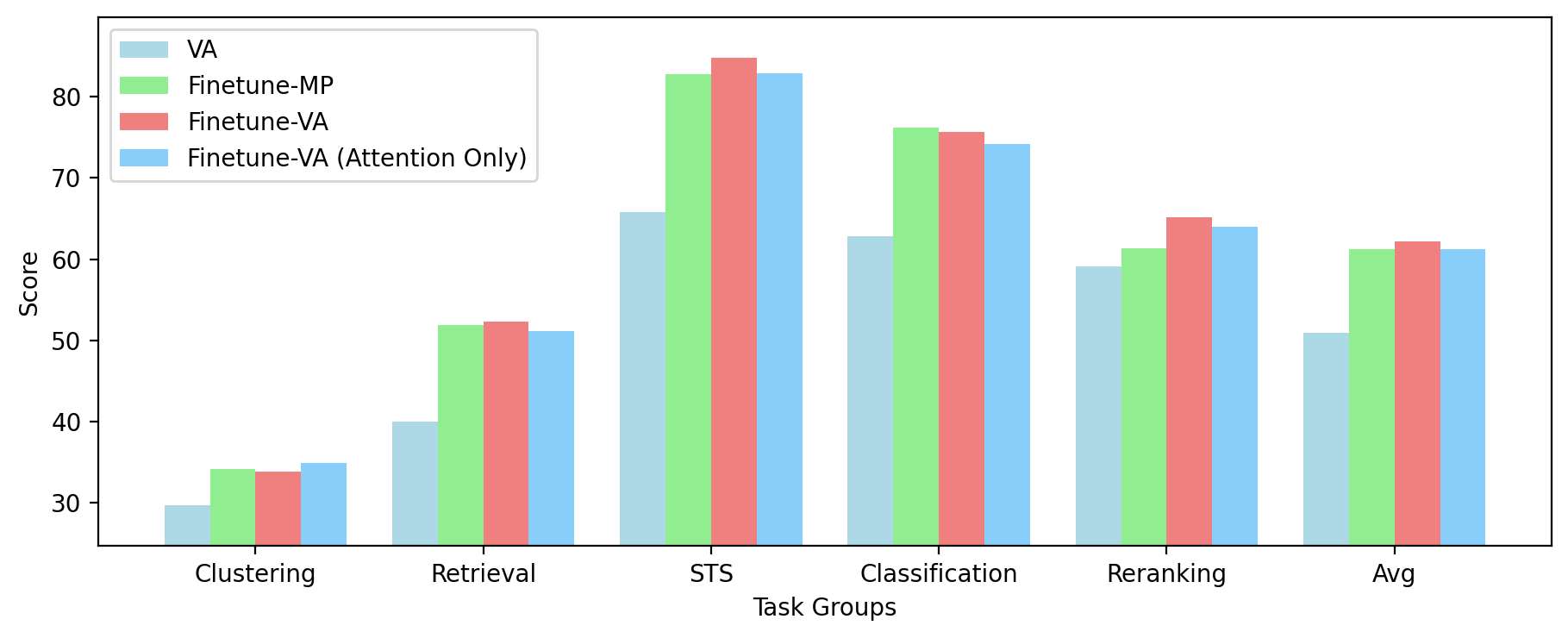}
    \caption{Results on the ``Average of Major Task Categories'' with VA and other finetuned models (LLama-2 (7B) backbone)}
    \label{fig:finetune_bar}
\end{figure}

\section{Related Works}

Large language models generate contextualized representations at multiple layers and token positions. To obtain fixed-length embeddings, aggregation is typically performed along both the layer dimension and the sequence dimension.

\paragraph{Layer-wise aggregation of hidden states}
Different layers capture information at varying levels of granularity, with earlier layers focusing on local lexical and syntactic features, while deeper layers optimize pre-training objectives. Consequently, intermediate layers are often more suitable for embedding tasks like retrieval and clustering \cite{DBLP:journals/corr/abs-2510-06477,app14198887,DBLP:conf/coling/VuME22}. Methods like MLTP \cite{tang2024poolingattentioneffectivedesigns}, MoEE \cite{DBLP:conf/iclr/LiZ25}, and LMORT \cite{sun2024llmorientedretrievaltuner} explore multi-layer aggregation, using pooling and Mixture-of-Experts routing weights for improved embeddings.

\paragraph{Sequence-level aggregation}
Once layers are selected, the next step is to aggregate sequence- or token-level representations. Common methods include mean pooling \cite{DBLP:conf/acl/NiACMHCY22}, weighted mean pooling \cite{DBLP:journals/corr/abs-2202-08904}, and pooling over special tokens like [CLS] in encoder-only models \cite{DBLP:conf/emnlp/ReimersG19} and [EOS] in decoder-only models \cite{DBLP:journals/corr/abs-2202-08904}. Training-based approaches like NV-Embed-v1 \cite{DBLP:conf/iclr/Lee0XRSCP25} introduce learnable pooling layers that aggregate token-level information.

\paragraph{Prompt-based aggregation}
Prompt-based methods focus information by using instruction-style templates. PromptEOL \cite{DBLP:conf/acl/LeiW00CTY24} and Echo embeddings \cite{DBLP:conf/iclr/SpringerKFNR25} aggregate information from specific tokens. MetaEOL \cite{DBLP:conf/acl/LeiW00CTY24} and CP \cite{DBLP:conf/acl/ChengWFJ00025} further enhance sentence representations using meta-task prompts and auxiliary instructions.

While the methods above primarily rely on hidden states for sentence embeddings, our work takes a novel approach by exploring other internal signals from LLMs, aiming to enrich embeddings beyond traditional hidden-state ones.

\section{Conclusion}

In this work, we revisited the foundation of sentence representation in large language models (LLMs) and demonstrated that value vectors—rather than hidden states—serve as a stronger basis for capturing sentence-level semantics. Building on this insight, we proposed Value Aggregation (VA), a simple yet effective method that aggregates value vectors across layers and tokens to form powerful training-free embeddings. We further introduced Aligned Weighted Value Aggregation (AlignedWVA), which aligns weighted value vectors with the residual stream, achieving state-of-the-art performance among training-free LLM-based embeddings while maintaining low computational cost. Finally, we showed a method for fine-tuning VA that can yield competitive results to fine-tuned mean-pooled HS baselines with far fewer parameters.

Our findings reveal that the value space of LLMs encodes rich semantic information that has been largely overlooked. We believe this work will inspire further research into value-based representations and their potential to bridge generative modeling and representation learning.

\section*{Acknowledgements}
This work was supported by the National Natural Science Foundation of China (Grant No. W2532049).

\section*{Impact Statement}
This paper presents work whose goal is to advance the field of Machine Learning, with a particular focus on improving sentence representations derived from large language models. The proposed value aggregation method may benefit applications such as semantic search, retrieval, clustering, and text understanding by improving representation quality and reducing embedding dimensionality in some settings. Like other representation learning methods, it may also inherit biases, privacy risks, or misuse risks from the underlying language models and downstream deployment contexts. Our work does not introduce new user data collection, decision-making systems, or application-specific deployment pipelines. We believe that the broader societal impacts are largely aligned with those of general-purpose representation learning and large language model research.

\bibliography{custom}
\bibliographystyle{icml2026}
\clearpage

\appendix

\section{Notation Information}
\label{Transformer notation}

The primary symbols for VA and HS are listed in Table~\ref{tab:notations}, and the main symbols for Transformer layer components are provided in Table~\ref{tab:notationForTransformers}. Additional notations are as follows:
\paragraph{Q, K, V Parameters Concatenation.}
The following expressions represent the concatenation of the parameters across heads:
\[
\begin{aligned}
\mathbf{W}^{l}_Q &= \mathrm{Concat}\left(\mathbf{W}^{l,1}_Q, \dots, \mathbf{W}^{l,H}_Q\right), \\
\mathbf{W}^{l}_K &= \mathrm{Concat}\left(\mathbf{W}^{l,1}_K, \dots, \mathbf{W}^{l,H}_K\right), \\
\mathbf{W}^{l}_V &= \mathrm{Concat}\left(\mathbf{W}^{l,1}_V, \dots, \mathbf{W}^{l,H}_V\right).
\end{aligned}
\]

\paragraph{Last Layer Hidden-state embedding (LHS) baselines.}
A common approach to obtain a sequence-level representation is to pool the last-layer hidden states across tokens. For instance, mean pooling and last-token pooling are defined as:
\[
\begin{aligned}
\textbf{LHS}_{\text{mean}}(x_{1:N})
&= \frac{1}{N}\sum_{n=1}^N\mathbf{x}_n^{L},\\
\textbf{LHS}_{\text{last}}(x_{1:N}) &= \mathbf{x}_N^{L}.
\end{aligned}
\]
These methods aggregate information that has been processed through the entire network, combining contributions from both attention and FFN operations across all layers.

\paragraph{Full-layer selection.}
The \textbf{Full} strategy aggregates representations from all transformer layers. The full-layer representation is defined as:
\[
\textbf{Full} (x_{1:N})
= \frac{1}{L}\sum_{l=1}^{L}
(\frac{1}{N}\sum_{n=1}^N\mathbf{x}^{l}_{n}).
\]
 This strategy preserves information from all depths of the network, including both lower-layer surface features and higher-layer semantic features.

\paragraph{Half-layer selection.}
The \textbf{Half} strategy aggregates representations only from the deep half of the transformer layers. Let $\mathcal{L}_{\text{half}}=\{L/2,\dots,L\}$ denote the selected layers. The representation is defined as:
\[
\textbf{Half}(x_{1:N})
= \frac{2}{|\mathcal{L}|}
\sum_{l\in\mathcal{L}_{\text{half}}}
(\frac{1}{N}\sum_{n=1}^N\mathbf{x}^{l}_{n}).
\]
By discarding lower-layer representations, this strategy focuses on deeper-layer features while reducing the influence of early-layer signals.

\begin{table*}[h]
\centering
\caption{Notations for description for VA and HS.}
\resizebox{\textwidth}{!}{
\begin{tabular}{@{}p{0.3\textwidth}p{0.7\textwidth}@{}}
\toprule
\textbf{Symbol} & \textbf{Description} \\
\midrule
$\textbf{HS (Full)}$ & Pooling hidden state across all layers and all tokens.\\
$\textbf{HS (Half)}$ & Pooling hidden state across deep half layers and all tokens.\\
$\textbf{HS}$ & Pooling hidden state across selected layers (same as VA) and all tokens.\\

$\textbf{VA (Full)}$ & Pooling value vectors across all layers and all tokens.\\
$\textbf{VA (Half)}$ & Pooling value vectors across deep half layers and all tokens.\\
$\textbf{VA}$ & Pooling value vectors across selected layers (seen in  Section ~\ref{value aggregation section}) and all tokens.\\

\textbf{WVA (LT)} & Directly pooling the last token's attention head output $\mathbf{z}^{l,h}_n$ across selected layers to get the weight VA result.\\
\textbf{WVA (PromptEOL)}& Pooling the last token's attention head output $\mathbf{z}^{l,h}_n$ of PromptEOL across selected layers to get the weight VA result.\\
\textbf{AlignedWVA (PromptEOL)}& Pooling the last token's attention layer output $\mathbf{a}^{l}_n$ of PromptEOL across selected layers to get the align VA result.\\
\textbf{AlignedWVA (FutureEOL)}& Pooling the last token's attention layer output $\mathbf{a}^{l}_n$ of FutureEOL across selected layers to get the align VA result.\\
\bottomrule
\end{tabular}}
\label{tab:notations}
\end{table*}

\paragraph{Layer selection baselines.}
Layer selection determines which transformer layers contribute to a sequence-level representation. A common design choice is whether to aggregate representations from all layers or only from deeper layers, based on the observation that deeper layers tend to encode higher-level semantic information. We consider two standard layer selection strategies.

\begin{table*}[h]
\centering
\caption{Notations for Transformer layer components.}
\resizebox{\textwidth}{!}{
\begin{tabular}{@{}p{0.3\textwidth}p{0.7\textwidth}@{}}
\toprule
\textbf{Symbol} & \textbf{Description} \\
\midrule
$\mathbf{x}^{l}_n \in \mathbb{R}^d$ & 
Hidden state of token $n$ at layer $l$. \\
$\mathbf{q}^{l,h}_n, \mathbf{k}^{l,h}_j, \mathbf{v}^{l,h}_j \in \mathbb{R}^{d_h}$ & 
Query/key/value vectors for token $n$/$j$ in head $h$ at layer $l$. \\
$\alpha^{l,h}_{n,j}$ & 
Attention weight from token $n$ to $j$ in head $h$ at layer $l$. \\
$\mathbf{z}^{l,h}_n \in \mathbb{R}^{d_h}$ & 
Output of attention head $h$ for token $n$ at layer $l$. \\
$\mathbf{W}^{l,h}_Q, \mathbf{W}^{l,h}_K, \mathbf{W}^{l,h}_V \in \mathbb{R}^{d \times d_h}$ & 
Projection matrices for queries/keys/values in head $h$ at layer $l$. \\
$\mathbf{W}^{l}_O \in \mathbb{R}^{d \times d}$ & 
Output projection matrix for multi-head attention at layer $l$. \\
$\mathbf{W}^{l}_1, \mathbf{W}^{l}_2 \in \mathbb{R}^{d \times d_{\text{ff}}}$ & 
Weight matrices of the feed-forward network (FFN) at layer $l$ ($d_{\text{ff}} = 4d$ typically). \\
$\mathbf{a}^{l}_n \in \mathbb{R}^d$ & 
Multi-head attention output for token $n$ at layer $l$. \\
$\mathbf{f}^{l}_n \in \mathbb{R}^d$ & 
FFN output for token $n$ at layer $l$. \\
\midrule
$\mathrm{LN}(\cdot)$ & 
Layer normalization operation (applied before each sublayer). \\
$\mathrm{MHA}^{l}(\cdot)$ &
Multi-head self-attention operation at layer $l$. \\
$\mathrm{FFN}^{l}(\cdot)$ &
Feed-forward network operation at layer $l$. \\
\bottomrule
\end{tabular}}
\label{tab:notationForTransformers}
\end{table*}

\section{Details for Truth-Condition Semantic}
\label{appendix:truth-condition semantic}
\begin{definition}[Sentence-Level Semantics]
Let \(X\) denote a set of possible contexts. The meaning of a sentence \(s\) is represented by a truth function.
\[
\llbracket s \rrbracket : X \to \{0,1\},
\]
where \(\llbracket s \rrbracket(x)=1\) indicates that \(s\) is true in context \(x\). The truth set of \(s\) is
\[
T(s)=\{x\in X \mid \llbracket s \rrbracket(x)=1\}.
\]
\end{definition}

To make ``truth conditions'' concrete, consider a context \(x\) that specifies facts such as (i) the location of a particular key, (ii) whether it is the correct key for a given door, and (iii) whether access constraints are satisfied. For the sentence
\[
s=\text{``The lab key is in my pocket.''},
\]
The truth set \(T(s)\) contains exactly those contexts where the lab key's location is \emph{pocket}. In contrast, the sentence
\[
s'=\text{``I can open the lab door.''}
\]
has a different truth set, since it additionally depends on whether the key is correct, whether the door is locked, and whether entry is permitted. This highlights that truth conditions are context constraints, rather than informal consequences of an utterance.

\paragraph{Graded Truth Assessments.}
To capture graded truth assessments, we extend this framework by assigning a probability distribution over contexts. Each \(x \in X\) is associated with a weight \(w_x \in [0, 1]\), which represents the likelihood that the sentence \(s\) is true in context \(x\). These weights are normalized to sum to 1 across all contexts, ensuring that the total probability mass is properly distributed:
\[
\sum_{x \in X} w_x = 1.
\]
We define the probabilistic truth function \( \llbracket s \rrbracket_p : X \to [0, 1] \), which assigns a probability to each context, representing the likelihood that the sentence \(s\) is true in that context. This provides a more nuanced understanding of sentence meaning, where different contexts can support varying degrees of truth.

\begin{definition}[Probabilistic Truth Function]
Given a weight function \(w_x\) for each context \(x \in X\), we define the probabilistic truth function for sentence \(s\) as:
\[
\llbracket s \rrbracket_p(x) = w_x, \quad \forall x \in X,
\]
where \(w_x\) represents the likelihood that the sentence \(s\) is true in context \(x\).
\end{definition}

In practice, the set of possible contexts \(\Omega\) is unobservable, and directly computing \(T(s)\) is often infeasible. However, in natural language processing, we can leverage the idea of sentence continuations as a proxy for truth conditions. Specifically, each continuation corresponds to a distinct truth condition, and the probability of a continuation reflects the likelihood of the associated truth condition.

\begin{assumption}[Likelihood Approximation of Truth Conditions]
\label{ass:cont-approx-truth}
We assume that the probability distribution over sentence continuations can approximate the likelihood of the corresponding truth conditions. Specifically, each continuation corresponds to a distinct truth condition, and the probability of a continuation is proportional to the weight \(w_x\) associated with the truth condition it represents. This assumption links the probabilistic interpretation of sentence continuation with the underlying truth conditions that govern sentence meaning.
\end{assumption}

\section{Dataset Information}

The dataset of MTEB subset information can be seen in Table ~\ref{tab:mteb-subset}.

\begin{table}[t]
    \centering
    \caption{Statistics of evaluation datasets}
    \small
    \resizebox{0.95\columnwidth}{!}{
    \begin{tabular}{c|c|c}
    \toprule
    \textbf{Category} & \textbf{Dataset}  & \textbf{\#Samples }\\
    \midrule
    \multirow{3}{*}{Clustering (3)} & BiorxivCS2S & 75000\\
    & MedrxivS2S  & 37500\\
    & \makecell{TwentyNewsgroups}  & 59545\\ \midrule
    \multirow{3}{*}{Retrieval (3)} & SciFact & 5483  \\
    & NFCorpus & 3956\\
    & ArguAna  & 10080\\ \midrule
    \multirow{3}{*}{STS (3)} & STS17 & 5692\\
    & SICK-R  & 19854\\
    & STSBenchmark & 2758\\ \midrule
    \multirow{3}{*}{\makecell{(Pair)\\ Classification (3)}} & Banking77 & 3696\\
    & EmotionClassification & 2096\\
    & SprintDuplicateQuestions & 8931\\ \midrule
    \multirow{2}{*}{Reranking (2)} & \makecell{StackOverflow\\DupQuestions} & 82798\\
    & SciDocsRR & 89131\\ \midrule
    Overall & 14 datasets & 406520\\
    \bottomrule
    \end{tabular}}
    \label{tab:mteb-subset}
\end{table}

\section{Hidden States Align Next-Token Embeddings}
\label{app:hidden-states-align}

From a contrastive learning view, autoregressive pretraining induces a token-level discrimination objective, pulling the context representation toward the embedding of the true next token and away from embeddings of other vocabulary tokens. This mechanism explains why pooling hidden states into a single vector inherits a representation space shaped by next-token prediction.

\begin{definition}[Autoregressive next-token prediction and token-level NLL]
Consider the pre-LN Transformer defined above with a causal attention mask. For a token prefix \(x_{<t}=(x_1,\ldots,x_{t-1})\), let \(\mathbf{v}_x \in \mathbb{R}^{d}\) denote the unembedding vector of token \(x \in V\). 
\[
p(x_t \mid x_{<t}) =
\frac{\exp\!\big((\mathbf{x}^{L}_{t-1})^\top \mathbf{v}_{x_n}/\tau\big)}
{\sum_{x \in V} \exp\!\big((\mathbf{x}^{L}_{t-1})^\top \mathbf{v}_x/\tau\big)}.
\]
The token-level negative log-likelihood (NLL) is
\begin{align*}
\mathcal{L}_{\text{NLL}}(x_t,x_{<t})
&= -\log p(x_t \mid x_{<t}) \nonumber\\
&= -\log
\frac{\exp\!\big((\mathbf{x}^{L}_{t-1})^\top \mathbf{v}_{x_t}/\tau\big)}
{\sum_{x \in V} \exp\!\big((\mathbf{x}^{L}_{t-1})^\top \mathbf{v}_x/\tau\big)}.
\label{eq:token-nll}
\end{align*}
\end{definition}

\begin{definition}[InfoNCE loss]
Let \(q \in \mathbb{R}^d\) be a query, let \(k^+ \in \mathbb{R}^d\) be a positive key, and let \(\mathcal{K}\) be a finite set of candidate keys that contains \(k^+\) and negatives. Given a similarity function \(\mathrm{sim}(\cdot,\cdot)\) and temperature \(\tau>0\), the InfoNCE loss is
\[
\mathcal{L}_{\text{InfoNCE}} = -\log \frac{\exp\big(\mathrm{sim}(q, k^+)/\tau\big)}{\sum_{k \in \mathcal{K}} \exp\big(\mathrm{sim}(q, k)/\tau\big)}.
\]
\end{definition}

\begin{proposition}[Token-level NLL is an InfoNCE instance]
\label{prop:nll-infonce}
Let \(\mathrm{sim}(a,b)=a^\top b\). By setting
\[
q = \mathbf{x}^{L}_{t-1}, \quad k^+ = \mathbf{v}_{x_t}, \quad \mathcal{K} = \{\mathbf{v}_{x}\}_{x \in V},
\]
we obtain \(\mathcal{L}_{\text{NLL}} \equiv \mathcal{L}_{\text{InfoNCE}}\).
\end{proposition}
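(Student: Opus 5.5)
The argument is a direct substitution: plug the stated choices of $q$, $k^+$ and $\mathcal{K}$ into the InfoNCE loss and check that the result is, term by term, the token-level NLL. No earlier result is needed beyond the two definitions just given.

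\emph{First, fix the setup.} The vocabulary $V$ is finite, so $\mathcal{K}=\{\mathbf{v}_x\}_{x\in V}$ is a finite candidate set. It contains the positive key $k^+=\mathbf{v}_{x_t}$ because $x_t\in V$. The temperature $\tau$ in the InfoNCE definition is taken to be the same $\tau$ used in the softmax of the autoregressive model.

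\emph{Second, substitute and compare.} With $\mathrm{sim}(a,b)=a^\top b$:
\begin{itemize}
\item the numerator of $\mathcal{L}_{\text{InfoNCE}}$ becomes $\exp\big((\mathbf{x}^{L}_{t-1})^\top\mathbf{v}_{x_t}/\tau\big)$;
\item the denominator becomes $\sum_{x\in V}\exp\big((\mathbf{x}^{L}_{t-1})^\top\mathbf{v}_x/\tau\big)$.
\end{itemize}
These are exactly the numerator and denominator of $-\log p(x_t\mid x_{<t})$, so $\mathcal{L}_{\text{NLL}}(x_t,x_{<t})=\mathcal{L}_{\text{InfoNCE}}$ identically, for every prefix and every model parameter.

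\emph{Third, state the interpretation.} Minimizing the NLL therefore pulls $\mathbf{x}^{L}_{t-1}$ toward the unembedding vector of the observed next token and pushes it away from the unembeddings of all other tokens, which act as negatives.

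There is one point to handle carefully before the substitution. In the definition of $p(x_t\mid x_{<t})$ the numerator is written with $\mathbf{v}_{x_n}$, which must be read as $\mathbf{v}_{x_t}$. With that reading, and with matching temperatures, the identification goes through.

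If one also wants to address multiset issues (two tokens sharing an unembedding vector), the fix is to index $\mathcal{K}$ by $x\in V$ rather than treat it as a set of vectors. The sums then agree term by term.
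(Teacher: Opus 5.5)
Your proposal is correct and matches the paper's proof, which is the same direct substitution of $q=\mathbf{x}^{L}_{t-1}$, $k^+=\mathbf{v}_{x_t}$, $\mathcal{K}=\{\mathbf{v}_x\}_{x\in V}$ and the dot-product similarity into $\mathcal{L}_{\text{InfoNCE}}$ to read off $\mathcal{L}_{\text{NLL}}$. Your extra care is worthwhile but not needed for the argument: reading $\mathbf{v}_{x_n}$ as $\mathbf{v}_{x_t}$ (the paper's proof repeats that typo), matching the temperatures, and indexing $\mathcal{K}$ by $x\in V$ so that duplicate unembedding vectors are not collapsed.
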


\begin{proof}
Substituting the specified \(q\), \(k^+\), \(\mathcal{K}\), and \(\mathrm{sim}\) into \(\mathcal{L}_{\text{InfoNCE}}\) yields
\[
-\log \frac{\exp\big((\mathbf{x}^{L}_{t-1})^\top v_{x_n}/\tau\big)}{\sum_{x\in V}\exp\big((\mathbf{x}^{L}_{t-1})^\top v_x/\tau\big)},
\]
which matches \(\mathcal{L}_{\text{NLL}}\).
\end{proof}

This equivalence clarifies the supervision signal for HS: it directly trains \(x_{t-1}^L)\) to align with the true next token. Therefore, pooling hidden states into a single sentence vector inherits a representation space shaped by \emph{token-level discrimination}, not by sentence-level similarity. 

\paragraph{Next-Token Probing.}
To validate whether hidden states encode the immediate next token better than value aggregation, we design a probing task where the model predicts the next token given a sentence prefix, shown in Figure~\ref{fig:next-token-prediction-probing}. This task directly aligns with the training objective of autoregressive modeling. As shown in Figure~\ref{fig:next token probing experiment}, we find that for next-token prediction, hidden states outperform VA by 15–20 accuracy points, which reflects the direct optimization of hidden states for next-token prediction.

\begin{figure}
    \centering
    \includegraphics[width=0.95\linewidth]{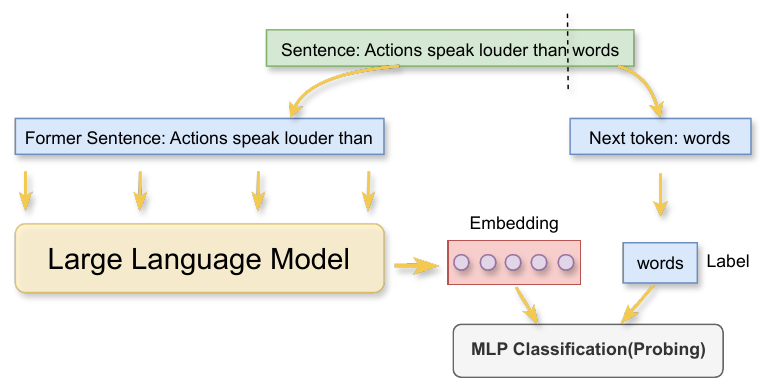}
    \caption{Architecture for Probing Next Token Prediction.}
    \label{fig:next-token-prediction-probing}
\end{figure}

\begin{figure}
    \centering
    \includegraphics[width=0.95\linewidth]{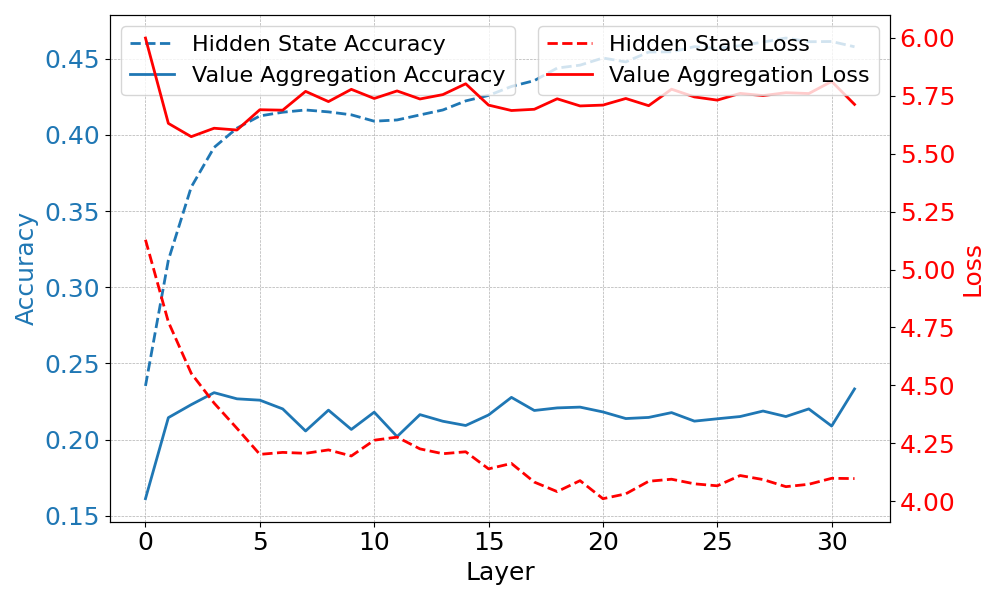}
    \caption{Accuracy and Loss in Next Token Prediction Probing.}
    \label{fig:next token probing experiment}
\end{figure}


\begin{figure}
    \centering
    \includegraphics[width=0.9\linewidth]{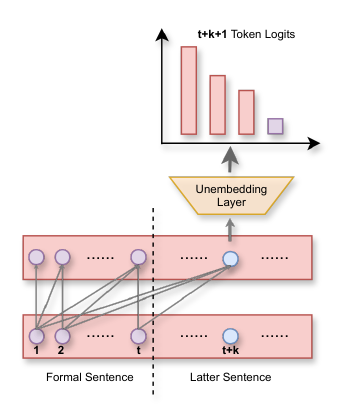}
    \caption{Logit Lens probing for predicting subsequent tokens.}
    \label{fig:logit-lens-probing-architecture}
\end{figure}

\section{Weight value vectors better support predicting subsequent tokens}
\label{logit_lens}
The probe uses \(\alpha^{l,h}_{t+k,\cdot}\), which depends on the query at position \(t{+}k\). This is informative for analysis, but not available at inference time when we want a single embedding from the observed sequence. This motivates weights \(\pi\) that can be computed from the prefix alone.
\paragraph{Empirical Evaluation: Logit Lens Probing.}
We use a probing task that measures whether a sentence representation captures information relevant to \emph{long-range continuation}. We adopt a logit-lens style evaluation that maps intermediate representations to vocabulary logits using the model's own unembedding. Unlike standard next-token evaluation, we probe tokens beyond \(t{+}1\) to isolate whether an embedding retains information that remains predictive deeper into the continuation.

\paragraph{Task Setup.}
As shown in Figure ~\ref{fig:logit-lens-probing-architecture}, given a sequence \(x_1,\dots,x_N\), we select a prefix length \(t\). We run the model while restricting attention to the prefix tokens \(x_1,\dots,x_t\), and evaluate predictions for continuation positions \(t{+}k\) with \(k \ge 1\). At each continuation position \(t{+}k\), the attention output at layer \(l\) and head \(h\) is a weighted sum over prefix value vectors:
\[
\begin{aligned}
\mathbf{z}^{l,h}_{t+k} &= \sum_{j=1}^{t} \alpha^{l,h}_{t+k,j}\,\mathbf{v}^{l,h}_j \in \mathbb{R}^{d_h}, \\
\mathbf{a}^{l}_{t+k} &= \mathrm{Concat}\!\left(\mathbf{z}^{l,1}_{t+k}, \dots, \mathbf{z}^{l,H}_{t+k}\right)\mathbf{W}^{l}_O,
\end{aligned}
\]
where \(\alpha^{l,h}_{t+k,j}\) are attention weights computed from the query at position \(t{+}k\) and prefix keys. We then apply the unembedding to obtain a distribution over the vocabulary:
\[
p(\hat{x}_{t+k+1}\mid x_1,\dots,x_t)
=
\frac{\exp\!\left((\mathbf{a}^{l}_{t+k})^\top \mathbf{v}_{\hat{x}_{t+k+1}}/\tau\right)}
{\sum_{x\in V}\exp\!\left((\mathbf{a}^{l}_{t+k})^\top \mathbf{v}_{x}/\tau\right)},
\]
where \(\mathbf{v}_x\) is the output token embedding and \(\tau\) is a temperature. This probe directly connects to Weighted VA: \(\mathbf{z}^{l,h}_{t+k}\) is a weighted aggregation of prefix values, and the weights vary with the continuation position.

\begin{figure}
    \centering
    \includegraphics[width=0.95\linewidth]{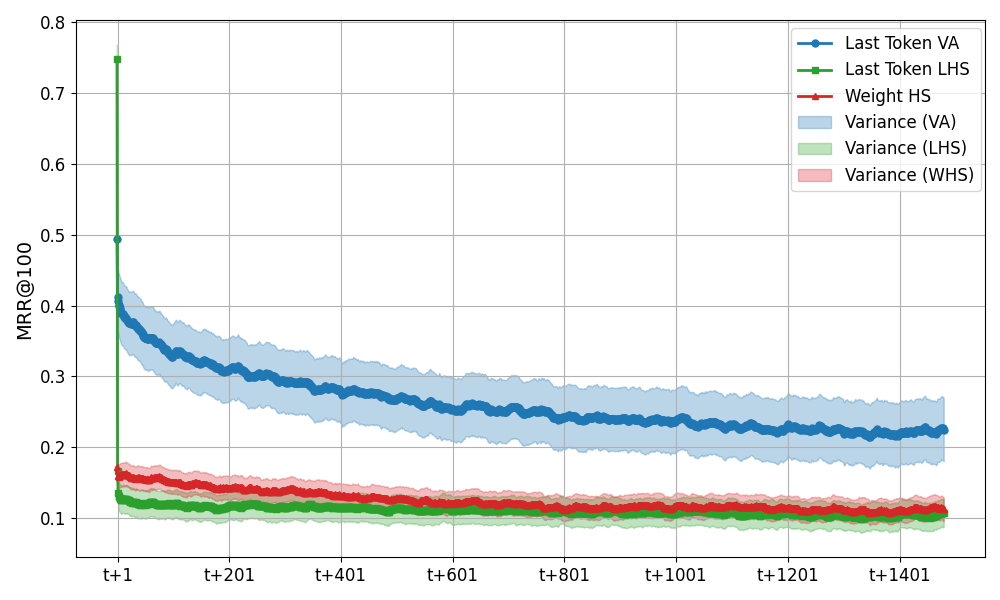}
    \caption{MRR@100 for subsequent-token prediction using weighted value aggregation, weighted hidden-state pooling, and last-hidden-state pooling.}
    \label{fig:subsequent-tokens-probing-results}
\end{figure}
\begin{table*}[t]
\centering
\caption{Finetuned embedding model performance evaluated on various tasks from the MTEB benchmark.}
\resizebox{\textwidth}{!}{
\begin{tabular}{lccccccccccccccccc}
\toprule
\multirow{2}{*}{\raisebox{-2ex}{\textbf{\centering Model}}} & \multirow{2}{*}{\raisebox{-2ex}{\makecell{\textbf{Dim}}}}& \multirow{2}{*}{\raisebox{-2ex}{\makecell{\textbf{Backbone}}}} & \multicolumn{3}{c}{Clustering} & \multicolumn{3}{c}{Retrieval} & \multicolumn{3}{c}{STS} & \multicolumn{3}{c}{Classification} & \multicolumn{2}{c}{Reranking} & \multicolumn{1}{c}{\multirow{2}{*}{\raisebox{-2ex}{Avg.}}} \\
\cmidrule(lr){4-6}\cmidrule(lr){7-9}\cmidrule(lr){10-12}\cmidrule(lr){13-15}\cmidrule(lr){16-17} 
 & & &\makecell{Bior.} & \makecell{Medr.} & \makecell{Twen.} & 
\makecell{SciF.} & \makecell{NFCo.} & \makecell{Argu.} & 
\makecell{STS17} & \makecell{SICK-R} & \makecell{STSB.} & 
\makecell{Bank.} & \makecell{Emot.} & 
\makecell{Spri.} & 
\makecell{Stac.} & \makecell{SciD.} &  \\
\midrule
\textbf{VA} & 4096 & Llama-2 & 32.45& 28.65&27.95 & 52.41&23.52 & 44.26& 74.08& 61.49& 61.72& 75.19& 39.54& 73.75&41.51 & 76.70 & 50.94\\
\midrule
\multicolumn{18}{c}{\textbf{Finetuning Methods}}\\
\midrule
\textbf{Finetune-MP} & 4096 & Llama-2 & 34.41 & 31.19 & 36.69 & 71.47 & 35.56 & 48.81 & 88.19 & 77.71 & 82.83 & 85.77 & 50.55 & 92.28 & 47.87 & 74.60 & 61.28\\
\textbf{Finetune-VA} & 4096 & Llama-2 & 33.30 & 29.97 & 38.26 & 71.86 & 38.21 & 46.88 & 89.76 & 79.97 & 84.62 & 82.92 & 49.32 & 94.67 & 49.29 & 80.98 & 62.14\\
\textbf{Finetune-VA (Attention Only)} & 4096 & Llama-2 & 34.12 & 31.02 & 39.64 & 70.53 & 31.12 & 51.75 & 88.57 & 77.63 & 82.56 & 85.37 & 47.24 & 90.07 & 48.20 & 79.70 & 61.25\\
\textbf{Finetune-MP} & 4096 & Qwen-3 & 33.85 & 30.82 & 40.04 & 73.36 & 29.57 & 52.62 & 89.11 & 77.59 & 85.11 & 85.97 & 48.53 & 87.45 & 50.06 & 75.73 & 61.42\\
\textbf{Finetune-VA} & 1024 & Qwen-3 & 32.67 & 29.84 & 40.20 & 70.95 & 28.97 & 51.32 & 88.98 & 76.95 & 83.44 & 85.24 & 48.78 & 90.86 & 49.03 & 73.81 & 60.79\\
\textbf{Finetune-VA (Attention Only)} & 1024 & Qwen-3 & 31.74 & 29.21 & 38.87 & 70.42 & 29.13 & 53.06 & 88.59 & 76.20 & 82.38 & 84.60 & 46.93 & 90.25 & 48.68 & 74.16& 60.30\\
\midrule
\multicolumn{18}{c}{\textbf{LLM-based Pretrained Embedding Models}}\\
\midrule
\textbf{LLM2Vec} & 4096 & Llama-2 &34.81 & 31.37 & 51.04 & 77.30 & 40.33 & 56.53 & 90.63 & 83.01 & 88.72 & 88.17 & 51.71 & 96.83 & 51.02 & 84.03 & 66.11 \\
\textbf{VA (Llm2vec)} & 4096 & Llama-2 & 33.87 & 30.73 & 47.82 & 74.95 & 29.51 & 56.98 & 88.48 & 81.12 & 87.07 & 83.32 & 52.75 &  97.07 & 52.31 & 83.02& 64.21\\
\textbf{Qwen3-Embedding-0.6B} & 1024 & Qwen-3 &39.98 & 36.76 & 51.20 & 70.65 & 36.06 & 70.65 & 93.32 & 84.69 & 91.23 & 85.19 & 59.77 &  \textbf{97.48} & 54.22 & 86.78 & 68.43\\
\textbf{VA (Qwen3-Embedding-0.6B)} & 1024 & Qwen-3 & 39.23 & 36.05 & 49.84 & 68.21 & 32.74 & 67.02 & 89.67 & 80.17 & 87.14 & 84.34 & 59.46 & 96.50 & 54.17 & 85.90 & 66.46\\
\textbf{Qwen3-Embedding-8B} & 4096 & Qwen-3 & \textbf{44.57} & \textbf{40.56} & \textbf{63.40} & \textbf{78.61} & \textbf{41.39} & \textbf{76.30} & \textbf{95.82} & \textbf{88.46} & \textbf{93.59} & \textbf{89.71} & 62.82 & 97.46 & \textbf{58.86} & \textbf{89.96} & \textbf{72.97} \\
\textbf{VA (Qwen3-Embedding-8B)} & 1024 & Qwen-3 & 44.29 & 39.97 & 60.40 & 75.16 & 34.48 & 72.71 & 94.15 & 83.94 & 91.00 & 89.40 & \textbf{63.93} & 96.06 & 58.19 & 88.76 & 70.89\\
\bottomrule
\end{tabular}
}
\label{tab:finetune_based_performance}
\end{table*}

\begin{table*}[t]
\centering
\caption{Different layers performance evaluated on various tasks from the MTEB benchmark.}
\resizebox{\textwidth}{!}{
\begin{tabular}{lccccccccccccccccc}
\toprule
\multirow{2}{*}{\raisebox{-2ex}{\makecell{\textbf{Selected}\\\textbf{Layers}}}} & \multirow{2}{*}{\raisebox{-2ex}{\makecell{\textbf{Dim}}}}& \multirow{2}{*}{\raisebox{-2ex}{\makecell{\textbf{Backbone}}}} & \multicolumn{3}{c}{Clustering} & \multicolumn{3}{c}{Retrieval} & \multicolumn{3}{c}{STS} & \multicolumn{3}{c}{Classification} & \multicolumn{2}{c}{Reranking} & \multicolumn{1}{c}{\multirow{2}{*}{\raisebox{-2ex}{Avg.}}} \\
\cmidrule(lr){4-6}\cmidrule(lr){7-9}\cmidrule(lr){10-12}\cmidrule(lr){13-15}\cmidrule(lr){16-17} 
 & & &\makecell{Bior.} & \makecell{Medr.} & \makecell{Twen.} & 
\makecell{SciF.} & \makecell{NFCo.} & \makecell{Argu.} & 
\makecell{STS17} & \makecell{SICK-R} & \makecell{STSB.} & 
\makecell{Bank.} & \makecell{Emot.} & 
\makecell{Spri.} & 
\makecell{Stac.} & \makecell{SciD.} &  \\
\midrule
\textbf{0-20\%} & 4096 & Llama-2 & 20.54 & 21.00 & 18.00 & 31.50 & 9.69 & 23.49 & 72.62 & 60.31 &58.48 & 68.24 &36.66 & 67.34 & 40.26 & 65.46 & 42.40 \\
\textbf{0-50\%} & 4096 & Llama-2 & 23.69 & 23.39 & 20.25 & 30.55 & 8.37 & 27.2 & 72.85 & 59.73 & 54.16 & 71.46 & 40.33 & 53.61 & 40.34 & 68.17 & 42.44\\
\textbf{0-100\%} & 4096 & Llama-2 & 31.69 & 28.30 & 26.34 & 51.28 & 21.00 & 42.75 & 74.51 & 61.47 & 60.94 & 73.89 & 39.58 & 71.42 & 41.63 & 76.16 & 50.07 \\
\textbf{20-50\%} & 4096 & Llama-2 & 23.64 & 23.31 & 20.54 & 28.89 & 8.02 & 27.40 & 72.84 & 59.46 & 53.62 & 72.64 & \textbf{40.78} & 50.37 & 40.16 & 68.10 & 42.13 \\
\textbf{50-85\%} & 4096 & Llama-2 & 32.23 & \textbf{29.16} & \textbf{30.25} & \textbf{54.29} & \textbf{24.09} & \textbf{44.84} & \textbf{75.46} & \textbf{62.19} & \textbf{62.66} & \textbf{76.17} & 40.73 & \textbf{73.87} & \textbf{42.27} & \textbf{77.16} & \textbf{51.81} \\
\textbf{50-100\%} & 4096 & Llama-2 & \textbf{32.45} & 28.65 & 27.95 & 52.41 & 23.52 & 44.26 & 74.08 & 61.49 & 61.72& 75.19 & 39.54 & 73.75 & 41.51 & 76.70 & 50.94\\
\textbf{85-100\%} & 4096 & Llama-2 & 31.21 & 27.98 & 25.92 & 52.02 & 21.41 & 43.21 & 72.41 & 59.81 & 59.14& 73.63 & 37.47 & 72.02 & 40.03 & 75.56 & 49.42\\
\bottomrule
\end{tabular}
}
\label{tab:different_layers}
\end{table*}
\paragraph{Metrics.}
We use mean reciprocal rank at 100 (MRR@100), which measures where the ground-truth token appears in the top-100 predictions. For \(M\) instances,
\[
\text{MRR@100}=\frac{1}{M}\sum_{i=1}^M \frac{1}{\text{rank}_i},
\]
where \(\text{rank}_i\) is the rank of the ground-truth token if it appears within the top 100 (and contributes 0 otherwise). We compute this score for each layer \(l\),
\[
\text{MRR@100}_{l}=\frac{1}{M}\sum_{i=1}^M \frac{1}{\text{rank}_i^{(l)}},
\]
and report the best layer,
\[
\text{Final MRR@100}=\max\left(\text{MRR@100}_{0},\dots,\text{MRR@100}_{L}\right),
\]
to capture where each representation is most predictive.

\paragraph{Experimental Setup.}
We evaluate on LongBench-v2 (1600 examples), truncating sequences to 2000 tokens to fit memory. We sample prefixes with lengths \(t\in[50,150]\) and probe multiple continuation offsets \(k\). We compare three representations: (1) VA with uniform averaging, (2) last-hidden-state pooling (LHS), and (3) weighted hidden-state pooling, where hidden states are reweighted using attention-derived token importance. All methods share the same backbone; only the aggregation strategy changes.

\paragraph{Results.}
Figure~\ref{fig:subsequent-tokens-probing-results} highlights a qualitative shift between predicting the immediate next token and predicting subsequent tokens. For \(k{=}1\), LHS performs best (MRR@100 \(>\) 0.7), consistent with the fact that the final-layer representation at the last position is directly optimized for next-token prediction. VA is lower (around 0.5), and weighted hidden-state pooling performs worst (MRR@100 \(<\) 0.2).

For \(k\ge 2\), VA becomes consistently more effective. Across \(k{=}2\) and \(k{=}3\), VA improves over both baselines by roughly 10--15 MRR@100 points and degrades more slowly as the offset increases. In contrast, LHS drops sharply after \(k{=}1\), suggesting that it is less suitable as a summary for longer-range continuation. These observations support the view that aggregating value vectors retains information that remains predictive beyond the next token, which is aligned with our motivation for value-based pooling.

\section{VA Result for Current Embedding Model}
The concrete result for finetuning the model can be seen in Table~\ref{tab:finetune_based_performance}.
\begin{figure}
    \centering
    \includegraphics[width=0.98\linewidth]{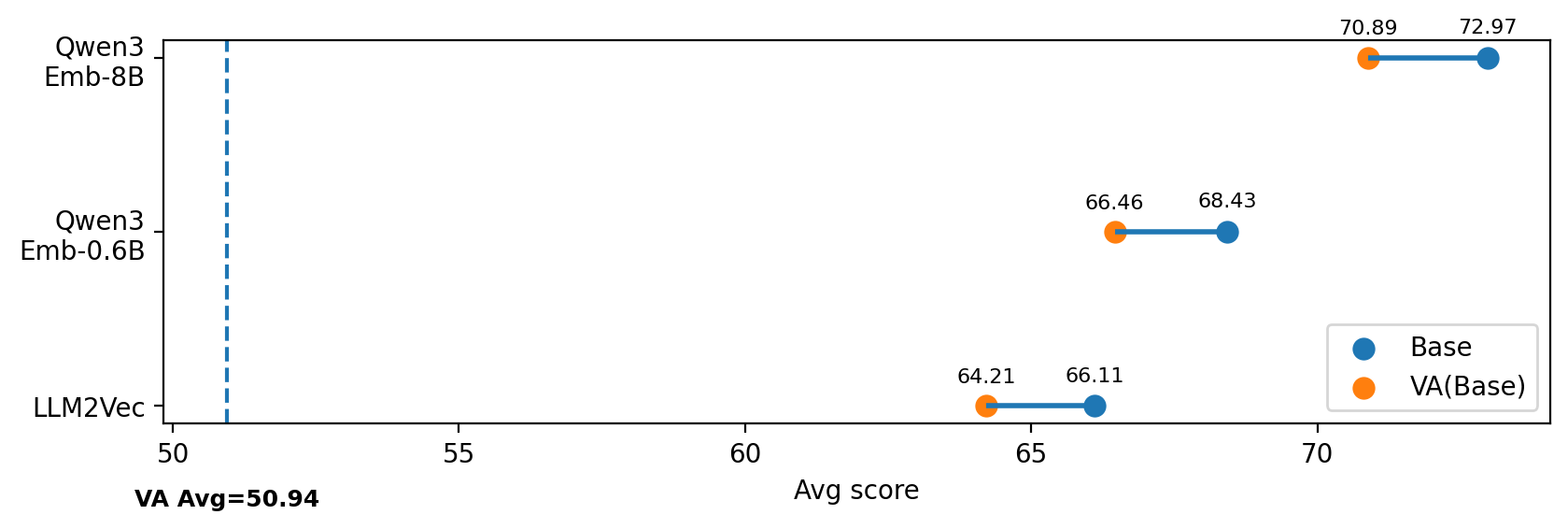}
    \caption{Base vs.\ +VA (VA baseline shown as dashed line).}
    \label{fig:va_avg_dumbbell}
\end{figure}

\paragraph{Experimental Setup}
We study whether sentence-embedding can also improve fine-tuning value aggregation (VA). In many embedding pipelines, models are trained with a contrastive objective defined on a pooled representation of the last hidden state. If such training also improves value-based representations, then standard retrieval-style training would implicitly benefit VA, without requiring any change to the loss or optimization procedure.

To test this, we consider two widely used embedding models built on the same backbone families studied above: \textsc{LLM2Vec} (LLaMA-2 backbone) and \textsc{Qwen3-embedding} (Qwen-3 backbone). We replace each model's original pooling scheme with VA and evaluate the resulting embeddings. For \textsc{Qwen3-embedding}, we include both the 0.6B and 8B variants. This choice covers both standard multi-head attention (LLaMA-2) and grouped-query attention (Qwen-3), aligning the evaluation with our earlier setting.

\begin{table*}[t]
\centering
\caption{Performance of different signals evaluated on various tasks from the MTEB benchmark.}
\resizebox{\textwidth}{!}{
\begin{tabular}{lccccccccccccccccc}
\toprule
\multirow{2}{*}{\raisebox{-2ex}{\textbf{\centering Model}}} & \multirow{2}{*}{\raisebox{-2ex}{\makecell{\textbf{Dim}}}}& \multirow{2}{*}{\raisebox{-2ex}{\makecell{\textbf{Backbone}}}} & \multicolumn{3}{c}{Clustering} & \multicolumn{3}{c}{Retrieval} & \multicolumn{3}{c}{STS} & \multicolumn{3}{c}{Classification} & \multicolumn{2}{c}{Reranking} & \multicolumn{1}{c}{\multirow{2}{*}{\raisebox{-2ex}{Avg.}}} \\
\cmidrule(lr){4-6}\cmidrule(lr){7-9}\cmidrule(lr){10-12}\cmidrule(lr){13-15}\cmidrule(lr){16-17} 
 & & &\makecell{Bior.} & \makecell{Medr.} & \makecell{Twen.} & 
\makecell{SciF.} & \makecell{NFCo.} & \makecell{Argu.} & 
\makecell{STS17} & \makecell{SICK-R} & \makecell{STSB.} & 
\makecell{Bank.} & \makecell{Emot.} & 
\makecell{Spri.} & 
\makecell{Stac.} & \makecell{SciD.} &  \\
\midrule
\textbf{Query Agg.} & 4096 & Llama-2 & 24.42 & 23.32 & 17.79 & 1.78 & 1.84 & 22.3 & 65.19 & 55.65 & 53.05 & 69.74 & 35.17 & 47.51 & 36.91 & 64.71 & 37.10\\
\textbf{Key Agg.} & 4096 & Llama-2 & 22.18 & 22.71 & 16.99 & 0.93 & 1.85 & 19.08 & 64.75 & 57.11 & 55.83 & 71.05 & 36.24 & 49.62 & 36.96 & 63.80 &37.08\\
\textbf{Activation Agg.} & 11008 & Llama-2 & 29.30 & 27.76 & \textbf{31.14} & 44.17 & 14.62 & 39.91 & 65.11 & 55.50 & 54.05 & 73.13 & 38.21 & 54.01 & 40.16 & 69.72 &45.49\\
\textbf{Value Agg. (VA)} & 4096 & Llama-2 & \textbf{32.45}& \textbf{28.65}&27.95 & \textbf{52.41} & \textbf{23.52} & \textbf{44.26}& \textbf{74.08}& \textbf{61.49}& \textbf{61.72}& \textbf{75.19}& \textbf{39.54}& \textbf{73.75}&\textbf{41.51} & \textbf{76.70} & \textbf{50.94}\\
\bottomrule
\end{tabular}
}
\label{tab:other_signal}
\end{table*}
\begin{table*}[t]
\centering
\caption{VA performance with different Qwen3 backbone sizes evaluated on various tasks from the MTEB benchmark.}
\resizebox{\textwidth}{!}{
\begin{tabular}{lccccccccccccccccc}
\toprule
\multirow{2}{*}{\raisebox{-2ex}{\textbf{\centering 
Model}}} & \multirow{2}{*}{\raisebox{-2ex}{\makecell{\textbf{Dim}}}}& \multirow{2}{*}{\raisebox{-2ex}{\makecell{\textbf{Backbone}}}} & \multicolumn{3}{c}{Clustering} & \multicolumn{3}{c}{Retrieval} & \multicolumn{3}{c}{STS} & \multicolumn{3}{c}{Classification} & \multicolumn{2}{c}{Reranking} & \multicolumn{1}{c}{\multirow{2}{*}{\raisebox{-2ex}{Avg.}}} \\
\cmidrule(lr){4-6}\cmidrule(lr){7-9}\cmidrule(lr){10-12}\cmidrule(lr){13-15}\cmidrule(lr){16-17} 
 & & &\makecell{Bior.} & \makecell{Medr.} & \makecell{Twen.} & 
\makecell{SciF.} & \makecell{NFCo.} & \makecell{Argu.} & 
\makecell{STS17} & \makecell{SICK-R} & \makecell{STSB.} & 
\makecell{Bank.} & \makecell{Emot.} & 
\makecell{Spri.} & 
\makecell{Stac.} & \makecell{SciD.} &  \\
\midrule
\textbf{VA (Full)} & 1024 & Qwen3-0.6B & 26.76 & 25.78 & 22.35 & 40.11 & 13.62 & 33.75 & 72.36 & 61.86 & 59.95 & 72.15 & 33.39 & 69.66 & 41.83 & 68.29 & 45.85\\
\textbf{VA (Half)} & 1024 & Qwen3-0.6B & 26.83 & \textbf{25.86} & \textbf{22.88} & 40.43 & 14.17 & 33.92 & 72.71 & \textbf{62.06} & 60.19 & 72.56 & 33.27 & 70.61 & 42.01 & 68.55 & 46.15\\
\textbf{VA (Full)} & 1024 & Qwen3-4B & 26.89 & 24.66 & 20.15 & 53.66 & 18.19 & 38.49 & 70.97 & 60.43 & 59.42 & 74.49 & 33.43 & 79.39 & 44.04 & 69.96 & 48.16\\
\textbf{VA (Half)} & 1024 & Qwen3-4B & 26.56 & 24.60 & 19.54 & 53.42 & 18.21 & 38.52 & 71.68 & 60.75 & 60.03 & 74.87 & 33.21 & 79.63 & 44.28 & 69.89 & 48.23\\
\textbf{VA (Full)} &  1024 & Qwen3-8B & 26.85 & 24.65 & 20.59 & \textbf{58.37} & 18.69 & 41.41 & 71.70 & 60.38 & 60.77 & 75.72 & 33.20 & 81.92 & 44.72 & \textbf{70.45} & 49.24\\
\textbf{VA (Half)} &  1024 & Qwen3-8B & \textbf{26.94} & 24.29 & 21.92 & 58.29 & 18.80 & 41.49 & 71.83 & 60.47 & 60.91 & 75.71 & 32.91 & 82.10 & \textbf{44.79} & 70.44 & 49.35\\
\textbf{VA (Full)} & 1024 & Qwen3-14B & 25.84 & 24.31 & 18.75 & 58.05 & \textbf{19.52} & 41.71 & \textbf{75.29} & 60.99 & \textbf{62.01} & 75.87 & \textbf{34.67} & 82.19 & 44.76 & 70.25 & 49.57 \\
\textbf{VA (Half)} & 1024 & Qwen3-14B & 25.69 & 24.60 & 19.82 & 57.79 & 19.47 & \textbf{41.86} & 75.19 & 60.99 & 61.98 & \textbf{76.04} & 34.41 & \textbf{82.32} & 44.75 & 70.23 & \textbf{49.65}\\

\bottomrule
\end{tabular}
}
\label{tab:different_sizes}
\end{table*}

\begin{table}[t]
\centering
\caption{Encoding time and peak GPU memory usage on the SciDocsRR dataset.}
\label{tab:efficiency_scidocsrr}
\resizebox{0.48\textwidth}{!}{
\begin{tabular}{lcc}
\toprule
\textbf{Method} & \textbf{Encode (minutes)} & \textbf{Peak GPU Memory (MiB)} \\
\midrule
MetaEOL & $\sim 279$ & $\sim 13738$ \\
AlignedWVA & $\sim 40$ & $\sim 13762$ \\
\bottomrule
\end{tabular}
}
\end{table}
\paragraph{Results and Discussion}
Figure~\ref{fig:va_avg_dumbbell} shows that contrastive embedding training consistently improves VA across both LLaMA-2 and Qwen-3 backbones. After fine-tuning, VA is within roughly 2 points of last-hidden-state pooling, despite the objective being defined on the final-layer output. This suggests that the supervision signal induced by contrastive training propagates through the attention computation and improves value representations, making VA compatible with existing embedding training pipelines without modifying the loss.

\section{Detailed Analysis of Layer Selection}

The results in Table~\ref{tab:different_layers} show that, for LLaMA-2-7B, VA is relatively stable with respect to layer selection once deeper layers are included. In particular, layer groups from the later part of the model, especially those covering 50--100\% of the model depth, yield consistently strong results across tasks. This indicates that VA does not depend on a narrowly selected layer subset, but instead can use value representations from a reasonably broad range of deeper layers.

This observation is consistent with prior findings in~\cite{DBLP:journals/corr/abs-2510-06477}, which report that compression valleys often appear around 20--85\% of the layer depth across different LLM families. These layers are likely to provide a useful trade-off: they are deep enough to contain contextualized semantic information, but not restricted to only the final layers, where representations may become more tied to next-token prediction. Another study~\cite{DBLP:conf/emnlp/WangLDCZMZS23} suggests that deeper layers, especially those in the 50--100\% range, are more related to extracting and using task-relevant information.

Based on these prior findings and our empirical results, we claim to use layers from 50--85\% of the model depth as a practical layer-selection rule. This choice keeps the selected layers within the empirically strong region while avoiding the most final layers only. It also reduces the need for model-specific layer tuning, making VA easier to apply across different backbones.

\section{Performance of Different Signals in LLM Transformer Blocks}

The design of VA is based on the intuition that value vectors are the main carriers of the information passed through the attention operation. In self-attention, queries and keys are mainly used to compute matching scores, while values provide the content that is weighted and aggregated. Therefore, if attention can be viewed as selecting and mixing information from the input sequence, value vectors are a natural source for sentence-level representation learning. To further test this design choice, we compare VA with other signals from the Transformer block in Table~\ref{tab:other_signal}. Specifically, we evaluate aggregation based on query vectors, key vectors, value vectors, and activation outputs.

From Table~\ref{tab:other_signal}, VA achieves the best performance among all aggregated signals. Query aggregation and key aggregation show similar representation quality, which is consistent with their paired roles in attention-based matching. However, both perform poorly on retrieval tasks. Activation aggregation performs better than query and key aggregation, but it has a much higher dimensionality, which increases storage and computation costs in downstream use. In contrast, VA gives the best average performance while keeping the same 4096-dimensional representation size as query and key aggregation.

\section{VA Scaling Analysis}

We further study whether VA benefits from larger backbones. The results in Table~\ref{tab:different_sizes} show a clear scaling trend: VA performance improves as the Qwen3 backbone size increases from 0.6B to 14B. This suggests that value representations become more effective as the base model gains stronger language modeling and semantic representation ability. The results on \textsc{Qwen3-Embedding-0.6B} and \textsc{Qwen3-Embedding-8B} in Table~\ref{tab:finetune_based_performance} provide additional support for this trend. These results indicate that VA can benefit from both larger pretrained models and contrastive embedding fine-tuning.

\section{Runtime Comparison}

MetaEOL uses 8 prompts and aggregates embeddings from 8 forward passes, making it computationally more expensive than VA variants, which require only a single forward pass. In contrast, the peak GPU memory usage of the two methods is largely comparable. Specifically, we conduct experiments on the SciDocsRR dataset with 89,133 samples, using a batch size of 1 on a single NVIDIA H20 GPU with 97,871 MiB of memory. All experiments are run under PyTorch 2.9.1+cu128 with eager attention. Table~\ref{tab:efficiency_scidocsrr} reports the embedding generation time and peak GPU memory usage under the same hardware and software settings.

\end{document}